\documentclass{fcs}

\ClearShipoutPictureBG
\ExplSyntaxOn
\tl_set:Nn \l__fcs_doi_url_tl {}
\tl_set:Nn \l__fcs_long_detailed_page_tl {}
\tl_set:Nn \l__fcs_short_detailed_page_tl {}
\tl_set:Nn \l__fcs_copyright_tl {}
\RenewDocumentCommand \maketitle { }
  {
    \thispagestyle { plain }
    \twocolumn
      [
        \skip_vertical:N -6mm
        \group_begin:
          \fontsize{18pt}{14pt} \selectfont \bfseries \g__fcs_title_tl \par
        \group_end:
        \skip_vertical:N 5mm
        \seq_use:Nn \g__fcs_authors_seq {,~}
        \skip_vertical:N 5mm
        \if_int_compare:w 1 < \seq_count:N \g__fcs_authors_seq
          \tl_gset:Nn \g__fcs_short_author_tl { \g__fcs_first_author_tl \ \l__fcs_etal_tl}
        \else:
          \tl_gset:Nn \g__fcs_short_author_tl \g__fcs_first_author_tl
        \fi:
        \color_group_begin:
        \color_select:n { black!70 }
        \seq_use:Nn \g__fcs_addresses_seq { \par }
        \color_group_end:
        \skip_vertical:N 5mm
        \color_group_begin:
        \color_select:n { black!70 }
        E-mail:\ \g__fcs_corr_email_tl .
        \IfBooleanT { \g__fcs_has_equal_contrib_bool }
          { \quad *\ These\ authors\ contributed\ equally\ to\ this\ work. }
        \color_group_end:
        \par
        \skip_vertical:N 5mm
        \g__fcs_abstract_tl
        \skip_vertical:N 5mm
        \l__fcs_keywords_tl
        \skip_vertical:N 5mm
      ]
  }
\ExplSyntaxOff

\title{Output Discipline Sets the Floor: A Falsification-Guided, Model-Agnostic Agent Protocol for Auditable Hypothesis-Driven Decision Making}

\author[1,+]{Jing Liu}
\author[1]{Yongxing Qi}
\author[2]{Ping Yao}
\author[1]{Ting Wu}
\address[1]{Hangzhou Innovation Institute, Beihang University, Hangzhou, China}
\address[2]{32007 Unit, Beijing, China}

\corremail{hiiliujing@buaa.edu.cn}

\begin{abstract}
On auditable, hypothesis-driven decision tasks---where a candidate hypothesis space can be enumerated and every candidate checked against the evidence---what actually binds the performance of large language models? On the public TCMEval-SDT benchmark for Traditional Chinese Medicine syndrome differentiation (200 cases, up to three repetitions per configuration, paired tests), we find that the binding constraint is \emph{output discipline and reasoning structure, not model capability alone}: two years of model scaling and generation turnover move zero-shot scores by only $-0.03$--$+0.08$, while output-discipline failures---non-terminating reasoning chains, format non-compliance, run-to-run spread spanning up to 0.39 of the score range---persist even in frontier models. We test this hypothesis with a model-agnostic agent framework that formalizes decision making as a five-stage closed loop---hypothesis generation, verification, causal analysis, action, and feedback---enforced by a candidate $\times$ evidence matrix falsification protocol: every candidate is judged against every evidence item (support / oppose / irrelevant), at least half of the candidates must be falsified before a conclusion is licensed, and the surviving most-supported candidates are emitted in a machine-verifiable single-line format. Same-model controlled comparisons across five base models show the protocol's effect is sign-conditional on zero-shot output discipline: it rescues models whose discipline fails (DeepSeek-V4-Flash: $+0.267$/$+0.271$, decomposing entirely into termination rescue and robust on budget-certified rescue subsets ($\Delta=+0.24$ to $+0.46$ across certification variants; Appendix~\ref{sec:budget-neutralization})), reduces the run-to-run standard deviation of a high-variance model by an order of magnitude (Kimi K3: 0.218 $\rightarrow$ 0.007 on syndrome), and significantly costs already-disciplined models ($-0.087$ to $-0.120$ on budget-matched, temperature-controlled Qwen3-4B/32B). Format-only ablations attribute the cost to the candidate-bounded format layer and reveal a sign-flipping complementarity between the causal chain and the format layer; knowledge ablations show the protocol's measured effect is independent of the injected knowledge content---a protocol--knowledge decoupling that supports the framework's knowledge-as-revisable-data design. A two-stage adaptation transfers the protocol to misinformation-robust mathematical reasoning, detecting injected false premises at 89.8\% recall with a 6.8\% false-positive rate---against 100\% for a flag-everything baseline (Matthews correlation coefficient 0.54 vs.\ 0)---and restoring accuracy to the no-misinformation ceiling on a comparatively robust model with a small degradation headroom (0.039). Where zero-shot output discipline is weak, the reasoning protocol sets the floor; base-model capability always sets the ceiling.
\end{abstract}
\keywords{agent ; constrained generation ; hypothesis testing ; falsification ; causal reasoning ; output discipline ; hypothesis-driven decision making ; Traditional Chinese Medicine ; syndrome differentiation}

\begin{document}

\section{Introduction}
\label{sec:intro}

Many high-stakes decisions follow a structured epistemic pattern: a decision maker collects evidence, proposes candidate hypotheses, verifies or falsifies each against the evidence, analyzes the causal chain connecting evidence to conclusion, and only then acts. Medical diagnosis is the canonical instance---whether in the Western tradition of differential diagnosis or, as in this paper's evaluation domain, the Traditional Chinese Medicine (TCM) process of \emph{bianzheng lunzhi} (syndrome differentiation and treatment): a physician collects evidence through the four examinations (inspection, auscultation/olfaction, inquiry, and palpation including pulse-taking), proposes candidate syndromes (\emph{zheng}), verifies or falsifies each hypothesis against the evidence, analyzes the causal chain from etiology to pathogenesis to syndrome, and only then determines the treatment principle and prescription. Epistemologically this is a closed loop of abduction (hypothesis generation), verification (hypothesis testing against evidence), causal analysis (deductive reasoning over domain logic, in the sense of causal modeling~\cite{pearl}), and action with feedback---and it recurs wherever decisions must be justified: clinical reasoning, fact checking, fault diagnosis, and scientific hypothesis testing.

Large language models (LLMs) have shown promise on such tasks, yet applying them exposes four persistent deficiencies. First, association-style direct mapping: given task inputs, models tend to jump directly to a conclusion by reproducing co-occurrence patterns from training corpora, rather than performing stepwise causal derivation---the shortcut-learning pathology documented from deep networks generally~\cite{geirhos2020shortcut} to LLMs specifically~\cite{du2024shortcut}. Second, absence of falsification: models neither propose a candidate hypothesis space nor verify each candidate against the evidence; conclusions are produced in a single pass and cannot be checked, reproduced, or audited, and left to critique itself a model does not fix this---intrinsic self-correction without external feedback fails to improve and can even degrade reasoning~\cite{huang2024selfcorrect,kamoi2024selfcorrection}. Without candidate-level verification the reasoning chain itself is also exposed to manipulation---injected false premises propagate along the chain and degrade conclusions even when the model internally possesses the correct knowledge~\cite{misinfoprop}. The sharpest form of this observation is that \emph{the knowledge is present and the instruction is present, yet correction still fails}: explicit correction instructions succeed in fewer than half of cases~\cite{misinfoprop}---what is missing is neither knowledge nor willingness but an enforced verification structure. Third, safety and reliability risks: hallucinated entities, quantities, and even dangerous recommendations pass without any hard verification---hallucination is a documented, detectable failure mode of LLM generation~\cite{farquhar2024semantic}, not a hypothetical concern. Fourth, deployment tension: cloud models raise data-privacy concerns, while on-premises models preserve privacy but inherit every deficiency above.

Existing work addresses these issues partially. Retrieval-augmented generation (RAG) grounds answers in knowledge bases but leaves the reasoning path free-form. Multi-agent medical systems distribute roles across agents but constrain the dialogue process rather than the reasoning path itself, at multiplied inference cost. Chain-of-thought prompting elicits reasoning text but offers no machine-verifiable checkpoint of correctness.

A striking empirical pattern motivates this paper. On the benchmark we adopt, two years of model scaling and generation turnover move zero-shot scores by only $-0.03$--$+0.08$ (Section~\ref{sec:progress}), while output-discipline failures---non-terminating reasoning chains, format non-compliance, and run-to-run spread spanning up to 0.39 of the score range (operationalized in Table~\ref{tab:discipline})---remain rampant even in frontier models (Section~\ref{sec:results}). We therefore hypothesize that on such tasks \emph{the binding constraint is output discipline and reasoning structure, not model capability alone}. The hypothesis is falsifiable and sign-conditional: a constrained-reasoning protocol should help exactly where output discipline is missing, and it should add pure, measurable overhead where discipline is already internalized. Testing it requires an instrument that enforces reasoning structure on an arbitrary base model without parameter updates; we build one.

The instrument is a model-agnostic agent framework that formalizes the epistemic loop above as a data-processing protocol and enforces it on the model's output. Its core is a candidate $\times$ evidence matrix falsification protocol: every candidate is judged against every evidence item (support / oppose / irrelevant), at least half of the candidates must be falsified before a conclusion is licensed, and the surviving most-supported candidates are emitted in a machine-verifiable single-line format. Knowledge is externalized as declarative, domain-expert-revisable data and injected under the protocol's control---what knowledge is available at each stage is determined by a lifecycle state machine and information-sufficiency scoring---so knowledge serves the protocol rather than the model's free association, and knowledge correction is decoupled from both model weights and control flow.

We instantiate and evaluate the framework on TCM syndrome differentiation. This choice is deliberate rather than incidental. First, TCM differentiation is one of the few natural tasks that compels all three classical inference modes within a single decision loop~\cite{peirce1878}: \emph{abduction} (from four-examination signs to candidate syndromes), \emph{deduction} (from a syndrome to its treatment principle and prescription), and \emph{induction} (from case statistics to the differential clues of the data anchor). Second, its philosophical framework (yin--yang, five phases) acts as a built-in prior that constrains rather than replaces inference---a natural ally of protocol-constrained reasoning, since the domain itself already formalizes inference as constrained derivation under a world model. Third, abduction is the least mature inference mode in current AI: models that perform well on deduction still struggle with abduction~\cite{dougrezlewis2025abduction}, so a quantified, sign-conditional account on an abduction-dense, philosophy-constrained domain is stronger evidence of how protocol constraints interact with model discipline than a uniform gain on a deduction-dense one.

Both predictions of the hypothesis are confirmed (Section~\ref{sec:results}): the protocol rescues models whose zero-shot output discipline fails, stabilizes a high-variance model, and measurably costs already-disciplined models. Our contributions are:

\begin{enumerate}
  \item \textbf{A quantified boundary condition for constrained reasoning, established as controlled evidence.} Same-model comparisons across five base models---three repetitions, paired Wilcoxon tests with bootstrap confidence intervals, and explicit evaluation hygiene (Section~\ref{sec:hygiene})---show that the protocol's gain is sign-conditional on the base model's zero-shot output discipline: large significant gains where discipline fails (DeepSeek-V4-Flash $+0.267$/$+0.271$, robust under post-hoc budget neutralization on the certified rescue subset), an order-of-magnitude reduction in run-to-run standard deviation for a high-variance model (Kimi K3: 0.218 $\rightarrow$ 0.007), and significant costs where discipline is already internalized ($-0.087$/$-0.112$ on Qwen3-32B; $-0.116$/$-0.120$ on budget-matched Qwen3-4B).
  \item \textbf{The instrument: a model-agnostic agent framework for auditable decision making.} A five-stage causal-chain closed loop (hypothesis generation $\rightarrow$ verification $\rightarrow$ causal analysis $\rightarrow$ action $\rightarrow$ feedback verification) instantiated for TCM syndrome differentiation, enforced by a candidate $\times$ four-examination evidence matrix falsification protocol---three-state (support/oppose/irrelevant) cell judgments, row-level aggregation, a ``no fewer than half incompatible'' sufficiency rule, and a most-supported selection with a single-line structured conclusion---turning differentiation from one-shot association into a reviewable hypothesis test.
  \item \textbf{Mechanistic attribution of the gains and costs.} A rescue/terminating decomposition shows that the full-set gains on non-terminating models are entirely termination rescue, at a small significant cost on reliably terminating cases; format-only ablations identify the candidate-bounded format as the sign-conditional layer and reveal a sign-flipping complementarity between the causal chain and the format layer.
  \item \textbf{Evidence of protocol--knowledge decoupling.} Removing \emph{all} injected skill knowledge leaves benchmark scores unchanged while raising hit rate, showing that the protocol's measured effect is carried by structure rather than by knowledge content; a specialty-tuned knowledge file is further shown to contaminate broad-domain cases, validating specialty-bound knowledge routing.
  \item \textbf{Cross-domain transfer, operationalized rather than asserted.} A two-stage adaptation of the same protocol detects injected false premises in mathematical reasoning at 89.8\% recall with a 6.8\% false-positive rate and restores accuracy to the no-misinformation ceiling on a public external testbed---on a single, comparatively robust model with a small degradation headroom (0.039), so the ceiling result is a proof of capability rather than a robustness guarantee. Persuasion and belief-updating theories developed for human subjects---inoculation theory, the continued influence effect, and belief-revision theory---are used as \emph{interpretive lenses} for the protocol, each inherited in weakened, computable form and each offering a consistent interpretation of its behavior rather than an independently testable prediction (Sections~\ref{sec:related} and~\ref{sec:discussion}). We state the novelty of this instantiation as a scoped claim rather than a priority claim: the LLM--cognitive-science intersection is moving fast, and existing LLM--misinformation work we are aware of uses such theories to inoculate \emph{humans} or to \emph{evaluate} model vulnerability.
\end{enumerate}

The remainder of this paper is organized as follows. Section~\ref{sec:related} reviews related work. Section~\ref{sec:methods} presents the framework. Section~\ref{sec:experiments} describes the experimental setup. Section~\ref{sec:results} reports results. Section~\ref{sec:discussion} discusses findings and limitations, and Section~\ref{sec:conclusion} concludes.

\section{Related Work}
\label{sec:related}

\subsection{Constrained, verified, and safe generation}

Chain-of-thought (CoT) prompting~\cite{cot}, self-consistency~\cite{selfconsistency}, and verification-guided decoding improve reasoning reliability but lack domain-specific falsification structure: no candidate space, no evidence matrix, no sufficiency criterion. Retrieval-augmented generation (RAG)~\cite{rag} grounds generation in retrieved documents and mitigates hallucination, but retrieval returns associationally similar text and the downstream reasoning remains unconstrained; neuro-symbolic and GraphRAG-style frameworks applied to TCM knowledge retrieval~\cite{opentcm} constrain the knowledge source but likewise leave the reasoning path free. Our framework uses retrieval differently: for knowledge queries it bypasses generation entirely (direct original-text answers), while for decision tasks it injects declarative knowledge as constraints rather than as answer sources; and our matrix protocol adds a structured hypothesis-testing layer on top of these general techniques.

A complementary line of work treats reasoning itself as a security object. Recent work defines reasoning safety along logical consistency, computational efficiency, and manipulation resistance, and proposes runtime monitors that inspect each reasoning step and interrupt unsafe trajectories~\cite{reasoningsafety}. This ``detect-and-interrupt'' paradigm is complementary to ours: monitors discover bad reasoning post hoc and halt it, at the price of an additional model in the loop and non-trivial false-positive rates, whereas our protocol constrains generation at source---deterministically, without an auxiliary model---at the price of requiring tasks that can be structured into a candidate $\times$ evidence form. Our audit logs (per-cell judgments, protocol-compliance scores) provide the white-box evidence that monitor-based approaches can only approximate.

\subsection{Agentic and multi-agent medical systems}

ReAct-style agents interleave reasoning and acting within a single loop~\cite{react}, but the reasoning remains a free-form scratchpad without verifiable checkpoints. Beyond this single-agent paradigm, LLM-based agents have been explored for decision making along three lines. Multi-agent collaboration instantiates multi-disciplinary consultation as inter-agent dialogue: MedAgents~\cite{medagents} organizes expert recruitment, iterative discussion, and consensus formation for zero-shot medical reasoning, and MDAgents~\cite{mdagents} adaptively routes queries to a solo agent, a multi-disciplinary team, or an integrated care team according to estimated complexity. Environment simulation builds hospital-scale simulacra in which doctor agents evolve through practice (Agent Hospital~\cite{agenthospital}) or serves as interactive clinical evaluation environments (AgentClinic~\cite{agentclinic}). Task-oriented TCM agents are closest to our setting: JingFang~\cite{jingfang} decomposes TCM consultation into inquiry, syndrome-differentiation, and treatment agents with dual-stage formula retrieval; TCM-Agent~\cite{tcmagent} orchestrates multiple agents for network-pharmacology-driven herbal research; and CORE-Acu~\cite{coreacu} couples structured chain-of-thought with a safety knowledge graph whose symbolic veto mechanism blocks contraindicated acupuncture decisions.

These systems constrain the dialogue process or the tool layer, but none enforces a verifiable reasoning protocol within a single reasoning instance: conclusions emerge from agent dialogue or role orchestration rather than from a machine-checkable hypothesis-testing procedure, and multi-role dialogue multiplies inference cost. Our mechanism executes the full abduction--verification--deduction--action--feedback closed loop inside a single model instance with machine-checkable protocols, keeping inference cheap and auditable.

\subsection{Reasoning robustness under information contamination}
\label{sec:contamination-threat}

Beyond ordinary errors, LLM reasoning is vulnerable to contaminated inputs. Misinformation is one form of a broader threat: the manipulation of the reasoning process itself---of which injected false premises, persuasive pressure, and contaminated evidence are all instances. We frame the problem at this level of generality and evaluate on its best-documented instance. Prior work shows that misinformation propagates along reasoning chains: even when models possess correct internal knowledge, injected false premises reduce knowledgeable accuracy by 10--72\%~\cite{misinfoprop}, and explicit correction instructions succeed in fewer than half of cases. Reasoning post-training (\emph{thinking} models) does not eradicate the vulnerability, which is traced to the instruction-tuning stage rather than to reasoning capacity itself; the proposed remedy---fine-tuning on early factual correction---is training-time and model-bound~\cite{misinfoprop}. The same mechanism analysis shows that correction is most effective when applied \emph{early} in the reasoning trajectory, before the contaminated premise propagates downstream~\cite{misinfoprop}; this is precisely the ordering our two-stage design enforces at inference time---the verification stage (candidate $\times$ evidence matrix) always precedes the causal-analysis stage (Section~\ref{sec:methods}), so premises are checked before any derivation builds on them.

The failure pattern is not machine-specific: in human reasoning, misinformation continues to shape inference even after recipients explicitly accept a correction---the \emph{continued influence effect}~\cite{johnson1998updating,lewandowsky2012cie}, traced to memory failure~\cite{swirethompson2023cognition}. On the machine side, reasoning-optimized models resist majority pressure in social-influence simulations (conformity 3.13\% vs.\ up to 19.45\%)~\cite{socialinfluence}, and enabling thinking mode reduces persuasion rates in multi-agent persuasion experiments~\cite{persuasionduality}. Yet such protection is bound to training, offers no auditability, and does not distinguish right from wrong: the thinking-mode benefit is documented precisely when the initial answer was already correct~\cite{persuasionduality}, so on initially wrong answers internalized resistance entrenches error rather than preventing it.

Our falsification protocol is an externally enforced, evidence-checking structure of this kind: a candidate survives only when supporting evidence outnumbers opposing evidence, and the re-examination loop re-opens the decision whenever falsification falls short. As an inference-time, training-free layer it complements training-time corrections and transfers across models; and because resistance comes from evidence checking rather than stance retention, it yields \emph{correctable insistence}, avoiding the error-entrenchment failure mode of stance-based defenses~\cite{persuasionduality,socialinfluence}. Direct evaluation under adversarial premise injection is reported in Section~\ref{sec:misinfoexp}: the mechanism transfers as a \emph{two-stage adaptation}---the candidate space becomes the premise equations, the evidence columns become numeric/relation/self-consistency checks, and falsified premises are rewritten reminder-style before derivation---rather than as a verbatim reuse of the clinical pipeline.

Two persuasion-theoretic lenses inform the design and its boundary condition. Inoculation theory~\cite{mcguire1964inoculation} confers resistance through forewarning plus refutational preemption of weakened attacks (established meta-analytically~\cite{banas2010inoculation}), and technique-based prebunking transfers across contents when what is neutralized is a fallacy \emph{type} rather than a topic~\cite{vanderlinden2017science,lewandowsky2021inoculation,banas2013logic,biddlestone2023conjunction}; requiring refutation practice on every candidate before any conclusion is licensed is enforced refutational preemption. The Elaboration Likelihood Model~\cite{petty1986elm} distinguishes central-route scrutiny from peripheral, cue-driven processing, and zero-shot LLM reasoning is biased toward the peripheral route~\cite{ziabari2025spectrum}; under this lens the matrix protocol is an \textbf{elaboration enforcer}, consistent with the output-discipline boundary condition of Section~\ref{sec:significance}. As far as our literature coverage extends, existing LLM--misinformation work treats models as \emph{generators} of prebunking material or as \emph{targets} whose vulnerability is measured; neither treats the model's own reasoning protocol as the \emph{carrier} of an inoculation structure. Section~\ref{sec:discussion} develops both mappings.

\subsection{LLMs and benchmarks for TCM}

Several benchmarks evaluate TCM capabilities of LLMs. TCMBench offers knowledge-centric question answering built on the TCM licensing examination, and reports that no evaluated model---including GPT-4---reaches the 60\% passing line~\cite{tcmbench}. TCM-SD evaluates syndrome derivation from case analysis over 54,152 real-world clinical records covering 148 syndromes~\cite{tcmsd}. MTCMB~\cite{mtcmb} extends evaluation to a multi-task setting spanning knowledge, reasoning, and safety, including diagnosis sub-tasks on syndrome differentiation and prescription recommendation. TCMEval-SDT~\cite{tcmeval} is, to our knowledge, the first benchmark targeting the thought process of syndrome differentiation, providing 300 expert-annotated cases with multi-select options for pathogenesis and syndrome, and zero-shot scores for ChatGPT, Gemini 1.5 Pro, ChatGLM-130B, and Tongyi Qianwen. We adopt TCMEval-SDT and its official scoring script for comparability.

On the model side, domain-adapted Chinese medical and TCM LLMs improve domain knowledge via continued pre-training and fine-tuning: Zhongjing~\cite{zhongjing} combines a real multi-turn dialogue corpus with reinforcement learning from human feedback (RLHF) over expert-ranked outputs; HuatuoGPT-II~\cite{huatuogpt2} proposes a single-stage domain-adaptation protocol; and TCM-oriented models such as TCMChat, BianCang, and Qibo~\cite{tcmchat,biancang,qibo} target syndrome differentiation, formula recommendation, and knowledge QA respectively. Their reasoning, however, remains a single-pass free-form generation that cannot be audited or falsified step by step. Our framework is orthogonal to these efforts: it wraps an arbitrary base model---without any parameter updates---in a verifiable reasoning protocol. Consistently, our experiments show that TCM post-training changes behavior little relative to the base model ($\pm0.01$), while the protocol's cost replicates on the post-trained variant ($-0.111$/$-0.112$)---i.e., training-time knowledge injection and inference-time protocol constraints address different bottlenecks.

\section{The Framework}
\label{sec:methods}

\subsection{Overview: a five-stage causal-chain closed loop}

The framework formalizes differentiation as a closed loop of five stages:

\begin{enumerate}
  \item \textbf{Hypothesis generation (abduction).} Given patient evidence, a candidate syndrome space is constructed from (i) candidates provided by the user or upstream reasoning (marker-delimited), (ii) specialty skill routing, and (iii) knowledge-base syndrome retrieval. A scope constraint restricts all subsequent conclusions to this space.
  \item \textbf{Hypothesis verification (evidence matrix).} For each candidate, authoritative diagnostic standards (chief/accompanying symptoms, tongue, pulse, treatment principle) are extracted from declarative knowledge files---with alias mapping that unifies synonymous syndrome names (e.g., ``phlegm-heat harassing the heart'' and ``phlegm-heat internal disturbance'') and a per-syndrome length cap---and a patient-evidence checklist is built with positive/negative feature rules (longest-match-first, negation-priority), yielding a candidate $\times$ evidence matrix.
  \item \textbf{Causal analysis (constrained reasoning chain).} The model is required to reason along a forced chain---four-examination analysis $\rightarrow$ eight principles $\rightarrow$ disease nature $\rightarrow$ disease location $\rightarrow$ syndrome synthesis---explicitly forbidden to jump from a single sign to a syndrome.
  \item \textbf{Action.} The treatment principle is derived over the TCM logic system; formula names, herbs, doses, and acupoints must come from knowledge-base originals and are verified via structured tool calls.
  \item \textbf{Feedback verification.} Prescriptions pass hard safety checks (incompatibility, pregnancy, toxicity dosage); diagnosis consistency across turns is enforced via conclusion caching and drift detection; reasoning quality is scored for audit.
\end{enumerate}

Figure~\ref{fig:architecture} shows the four-layer system architecture; Figure~\ref{fig:pipeline} shows the end-to-end processing pipeline.

\begin{figure*}[t]
\centering
\includegraphics[width=\textwidth]{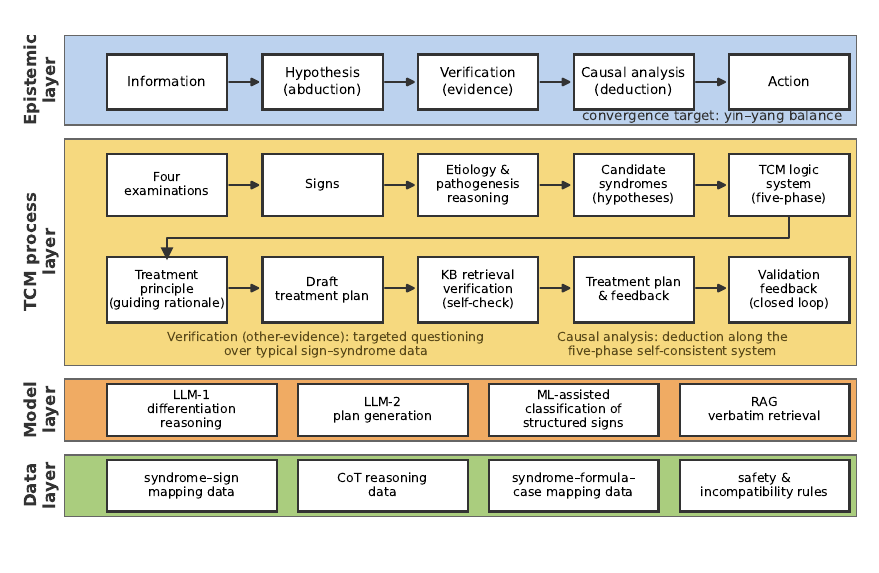}
\caption{Four-layer architecture. The epistemic layer defines the hypothesis--verification--causal-analysis--action loop; the TCM process layer carries the differentiation workflow, closing the loop through the validation-feedback stage; the model layer (LLM-1/LLM-2/ML/RAG) executes it; the data layer holds syndrome--sign mappings, reasoning data, case mappings, and safety rules. Within the process layer, verification takes the form of targeted questioning over typical sign--syndrome data, and causal analysis proceeds over the five-phase self-consistent system.}
\label{fig:architecture}
\end{figure*}

\begin{figure*}[t]
\centering
\includegraphics[width=\textwidth]{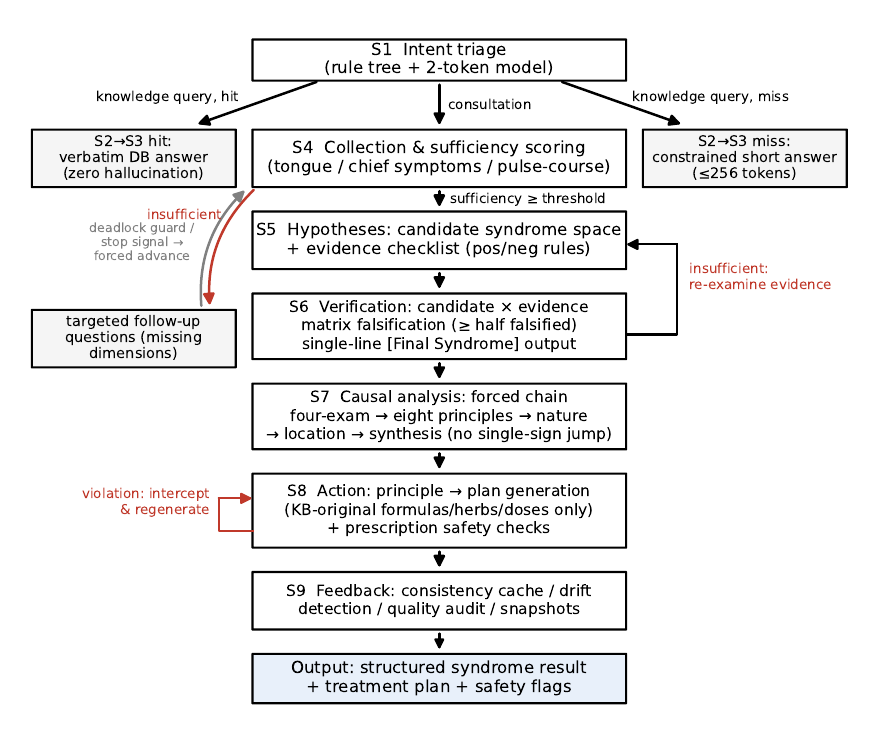}
\caption{End-to-end processing pipeline (S1--S9) with anti-deadlock questioning guards, the insufficiency re-examination loop after matrix falsification, and prescription-violation interception.}
\label{fig:pipeline}
\end{figure*}

\subsection{Candidate $\times$ evidence matrix falsification protocol}
\label{sec:matrix}

The protocol, the core of this work, constrains the model's conclusion process as follows:

\begin{itemize}
  \item \textbf{Matrix construction.} Rows are candidate syndromes; columns are the patient's four-examination evidence items.
  \item \textbf{Cell judgment.} Each cell is judged \emph{support}, \emph{oppose}, or \emph{irrelevant}, justified by the injected diagnostic standards.
  \item \textbf{Row aggregation.} A candidate is \emph{compatible} iff supporting evidence outnumbers opposing evidence and key evidence is supported; otherwise it is \emph{incompatible}.
  \item \textbf{Sufficiency rule (falsification).} The number of incompatible candidates must be \textbf{at least half of the candidate space}; otherwise differentiation is deemed insufficient and evidence must be re-examined.
  \item \textbf{Selection and output.} The final conclusion is the compatible candidate with the most supporting evidence (or several, in multi-answer settings), emitted on a single line in a fixed marker format (\texttt{[Final Syndrome]: X})---machine-extractable and checkable.
  \item \textbf{Prohibitions.} Choosing by candidate order and copying example conclusions are explicitly banned.
\item \textbf{Explicit refutation.} An opposed cell judgment must name the contradicting evidence (e.g., ``opposed by E3''). Silent elimination of a candidate is discouraged: memory-updating research shows that corrections which explicitly re-state the discredited item---co-activating it with the conflicting evidence---update beliefs more effectively than corrections that only supply the correct answer~\cite{wahlheim2020reminders}, and the logged evidence pointer makes each falsification auditable.
\end{itemize}

Every cell judgment can be cross-checked against the evidence checklist and diagnostic standards, making the reasoning white-box and auditable. Figure~\ref{fig:matrix} illustrates the protocol on a worked three-candidate example. Formal readings of the protocol are collected in \ref{sec:agm-reading}: a belief-revision (AGM) reading of the aggregation, contraction, and consistency rules; the proof of Proposition~\ref{prop:accumulation} with its caveats.

\paragraph{Statistical reading: sequential multi-hypothesis testing} Treat the candidate set $\mathcal{H}=\{h_1,\dots,h_m\}$ as $m$ competing hypotheses and each evidence item $e_j$ as an observation; the cell function $f(h_i,e_j)\in\{+1,0,-1\}$ is a per-hypothesis score. Define the row statistic
\begin{equation}
S(h_i) \;=\; \sum_{j=1}^{n} w_j\, f(h_i,e_j), \qquad w_j \ge 0,
\label{eq:rowstat}
\end{equation}
with compatibility declared iff $S(h_i)>0$ and at least one key evidence item $e_{\mathrm{key}}$ satisfies $f=+1$. The deployed protocol uses uniform weights ($w_j{=}1$): the model counts supporting against opposing items, and the key-evidence requirement enters as a separate hard gate rather than as a large weight. The ``at least half falsified'' rule is then a \emph{sufficiency stopping criterion}: with the falsified set containing both score-incompatible and key-evidence-failed candidates, inference is licensed only once the condition of Eq.~\eqref{eq:suffrule} holds
\begin{equation}
\frac{\left|\{\,h_i : S(h_i)\le 0 \ \text{or}\ f(h_i,e_{\mathrm{key}})\ne +1\,\}\right|}{m} \;\ge\; \tfrac{1}{2}.
\label{eq:suffrule}
\end{equation}
Under a \emph{mutually exclusive syndromes, conditionally independent evidence} model, each falsified candidate transfers posterior probability to the survivors, so the posterior concentration on the remaining hypothesis set increases monotonically in the number of eliminated candidates. The following proposition makes the intuition ``more evidence, more reliable conclusions'' a theorem rather than a slogan.

\begin{proposition}[Evidence-accumulation bound]
\label{prop:accumulation}
Assume (i) the true syndrome $h^{\star}\in\mathcal{H}$ is present; (ii) cell judgments are conditionally independent given $h^{\star}$; (iii) each cell judgment is correct with probability at least $1-\epsilon$, $\epsilon<\tfrac{1}{2}$, where correctness is judged against the ground-truth cell value $v(h,e)\in\{+1,0,-1\}$, and errors are \emph{sign-symmetric} in the sense that $\mathbb{E}[f(h,e)]=(1-2\epsilon)\,v(h,e)$; (iv) key-evidence requirements hold; and (v) for every spurious candidate $h_i$, all $n$ evidence items weakly favor the true candidate ($v(h^{\star},e_j)\ge v(h_i,e_j)$ for all $j$), with at least $n_i$ of them strictly discriminating ($v(h^{\star},e_j){=}+1$, $v(h_i,e_j){=}-1$). Then the probability that the protocol's selection rule returns $h^{\star}$ is bounded below by
\begin{equation}
\Pr[\,\widehat{h}=h^{\star}\,] \;\ge\; 1 - \sum_{i\ne\star}\exp\!\Bigl(-2(\tfrac12-\epsilon)^{2}\,\frac{n_i^{2}}{n}\Bigr),
\label{eq:hoeffding}\end{equation}
a Hoeffding-type bound~\cite{hoeffding1963}. In the special case of perfect discrimination on every item ($n_i{=}n$ for all $i$), Eq.~\eqref{eq:hoeffding} reduces to $1-(m-1)\exp(-2(\tfrac12-\epsilon)^2 n)$, non-trivial once $n$ exceeds $\ln(m{-}1)/(2(\tfrac12-\epsilon)^2)$.
\end{proposition}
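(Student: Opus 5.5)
The plan is to compare the row statistics pairwise and bound the failure event by a union bound. The selection rule returns $h^{\star}$ whenever three things hold: $h^{\star}$ is declared compatible, and for every spurious $h_i$ either $h_i$ is falsified or $S(h_i)<S(h^{\star})$. Under uniform weights $w_j=1$, a sufficient event is that $D_i:=S(h^{\star})-S(h_i)>0$ for every $i\ne\star$ and also $S(h^{\star})>0$. Assumption (iv) takes care of the key-evidence gate for $h^{\star}$. I will fold the requirement $S(h^{\star})>0$ into the pairwise comparisons as follows: whenever some $h_i$ is score-incompatible ($S(h_i)\le 0$), the event $D_i>0$ still forces $S(h^{\star})>S(h_i)$. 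I will flag that this step needs care. The cleanest route is to read the selection rule as ``the maximum-$S$ candidate among those passing the gate''. Then $D_i>0$ for all $i$ already forces $\widehat h=h^{\star}$, because either $h^{\star}$ is the unique maximiser or the survivors set is nonempty and contains it. I will state this reading explicitly as part of the proof's caveats. The union bound then gives $\Pr[\widehat h\ne h^{\star}]\le\sum_{i\ne\star}\Pr[D_i\le 0]$.

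For a fixed $i$, write $D_i=\sum_{j=1}^{n}X_j$ with $X_j=f(h^{\star},e_j)-f(h_i,e_j)$. The first step is the mean. By the sign-symmetry in (iii),
\[
\mathbb{E}[D_i]=(1-2\epsilon)\sum_j\bigl(v(h^{\star},e_j)-v(h_i,e_j)\bigr).
\]
By (v), every summand is nonnegative and at least $n_i$ of them equal $2$. So $\mathbb{E}[D_i]\ge 2(1-2\epsilon)n_i=4(\tfrac12-\epsilon)n_i$.

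The second step is concentration. By (ii), the $X_j$ are independent across $j$. Each lies in an interval of length at most $4$, since $f\in\{-1,0,1\}$. Hoeffding's inequality~\cite{hoeffding1963} then gives
\[
\Pr[D_i\le 0]\le\Pr\bigl[D_i-\mathbb{E}D_i\le -\mathbb{E}D_i\bigr]\le\exp\!\Bigl(-\frac{2(\mathbb{E}D_i)^2}{16n}\Bigr)\le\exp\!\Bigl(-2(\tfrac12-\epsilon)^2\frac{n_i^2}{n}\Bigr).
\]
The constants match the statement exactly: $(4(\tfrac12-\epsilon)n_i)^2\cdot 2/(16n)=2(\tfrac12-\epsilon)^2n_i^2/n$. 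Summing over $i$ yields Eq.~\eqref{eq:hoeffding}.

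The special case follows by setting $n_i=n$. The bound $(m-1)\exp(-2(\tfrac12-\epsilon)^2n)$ falls below $1$ exactly when $n>\ln(m-1)/(2(\tfrac12-\epsilon)^2)$.

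I expect the main obstacle to be the independence step rather than the arithmetic. Assumption (ii) asserts conditional independence of cell judgments. However, $X_j$ couples two cells in the same column, $f(h^{\star},e_j)$ and $f(h_i,e_j)$. To apply Hoeffding, I only need independence across columns $j$, which follows from (ii); dependence within a column is harmless because each $X_j$ is treated as a single bounded variable. I will remark on this explicitly. I will also note that the bound is conservative: if one tracks that ties $v(h^{\star},e_j)=v(h_i,e_j)$ contribute zero-mean terms, a Bernstein-type refinement could be tighter. For the stated proposition, plain Hoeffding suffices.
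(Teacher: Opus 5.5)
Your core argument is the paper's own proof sketch. It uses the same difference statistic $D_i=S(h^{\star})-S(h_i)$, the same mean bound $\mathbb{E}[D_i]\ge 2(1-2\epsilon)n_i$ from sign-symmetry and (v), Hoeffding with per-column range $[-2,2]$, and a union bound over the $m-1$ spurious candidates. Your constant check is right, and so is your remark that only independence across columns is needed.

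The step that fails is folding the compatibility gate $S(h^{\star})>0$ into the pairwise events. Under your reading (maximum-$S$ candidate among those passing the gate), the event $\{D_i>0\ \forall i\}$ does make $h^{\star}$ the unique maximiser of $S$. But if $S(h^{\star})\le 0$, every candidate fails the gate, the survivor set is empty, and nothing is returned. So the claim that $D_i>0$ for all $i$ already forces $\widehat h=h^{\star}$ is false.

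The stated assumptions cannot rescue it, because (v) allows items with $v(h^{\star},e_j)=v(h_i,e_j)=-1$. Take $m=2$, $\epsilon=0$, $n=10$:
\begin{itemize}
\item $h^{\star}$ is supported on 3 items (including the key item) and opposed on the other 7;
\item $h_1$ is opposed on all 10.
\end{itemize}
Then (i)--(v) hold with $n_1=3$ and $D_1=6>0$. Yet $S(h^{\star})=-4$, so the gated rule never returns $h^{\star}$, while Eq.~\eqref{eq:hoeffding} promises probability at least $1-e^{-0.45}\approx 0.36$.

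The paper's sketch avoids this only by asserting that ``a selection error requires $D_i\le 0$ for some $i$.'' That tacitly takes $\widehat h=\arg\max_{h}S(h)$ over the whole pool, with no score gate and no one-half sufficiency stopping rule. You have two ways to proceed:
\begin{itemize}
\item State that ungated reading explicitly as a convention. Your proof is then complete and matches the paper.
\item Keep the gated rule, assume $V^{\star}:=\sum_j v(h^{\star},e_j)>0$, and add one more Hoeffding term to the union bound, $\Pr[S(h^{\star})\le 0]\le\exp\bigl(-2(\tfrac12-\epsilon)^2 (V^{\star})^2/n\bigr)$. This changes the stated bound.
\end{itemize}
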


\paragraph{Information-theoretic and decision-theoretic readings of the one-half threshold} Why one half? Two readings illuminate the deployed choice, and we state upfront what neither does: uniquely select the threshold. \emph{(i)} An information reading gives the symmetric point a label---under a uniform prior, falsifying half the candidates removes exactly one bit of hypothesis-space uncertainty (Eq.~\eqref{eq:entropy})---but the marginal information of the $(k{+}1)$-th falsification, $\log_{2}\!\frac{m-k}{m-k-1}$, \emph{increases} with $k$, so the same calculus pushes toward stopping as late as possible: a landmark, not an optimum. \emph{(ii)} A decision-theoretic reading defines the conclusion confidence $C(k)=k/m$ (Eq.~\eqref{eq:confidence}): raising the required $C$ monotonically shrinks both the licensed output set and the selection error, a trade that holds for \emph{any} threshold and therefore cannot single out one half, but that makes $C$ a per-case, matrix-computable confidence score---the same budget logic as uncertainty sampling in active learning~\cite{lewis1994uncertainty,settles2009active}, applied at inference time. What selects the operating point is therefore empirical: the $\kappa\in\{\tfrac14,\tfrac12,\tfrac34\}$ ablation of Section~\ref{sec:threshold} (Table~\ref{tab:kappa}), which shows the deployed one half is \emph{not} the winner and that the profitable direction is stricter falsification. At the far edge, when falsification drives $k$ toward $m$, the conclusion set may empty, so the protocol's re-examination loop, not further elimination, is the correct response, and $C$ should be read jointly with the size of the surviving set.
\begin{equation}
\Delta H(k) \;=\; \log_{2}\frac{m}{\,m-k\,}\ \text{bits}, \qquad \Delta H\bigl(\tfrac{m}{2}\bigr)=1\ \text{bit},
\label{eq:entropy}
\end{equation}
\begin{equation}
C(k) \;=\; 1-\frac{|\mathcal{H}_{\mathrm{surv}}|}{m} \;=\; \frac{k}{m}.
\label{eq:confidence}
\end{equation}

\begin{figure*}[t]
\centering
\includegraphics[width=\textwidth]{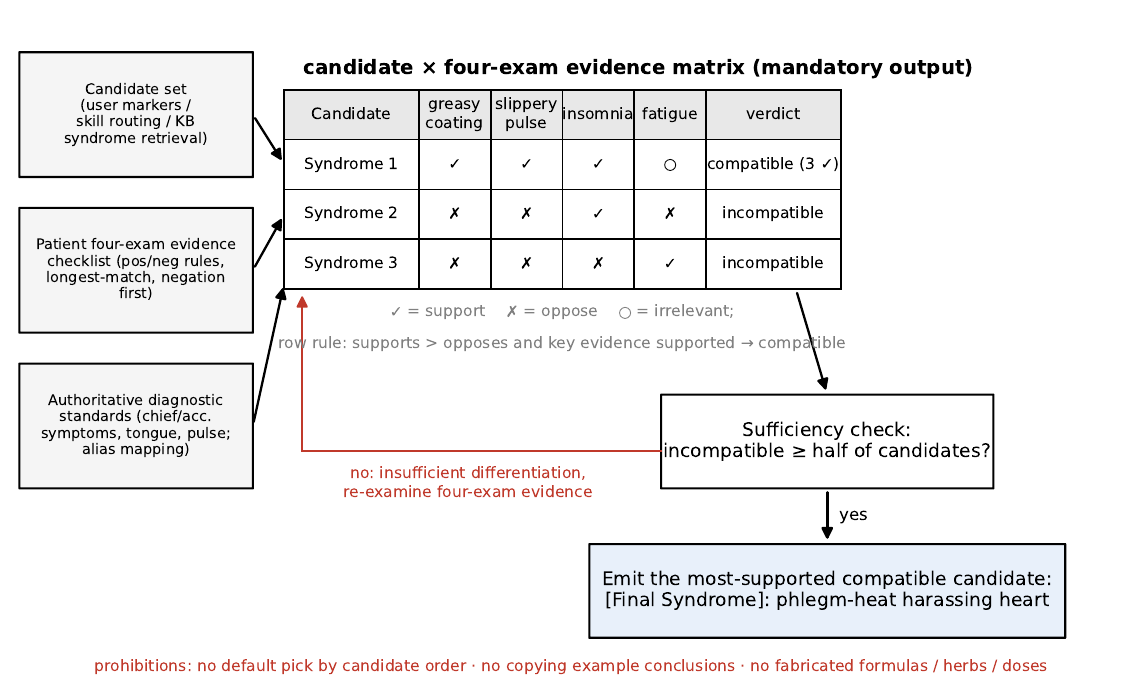}
\caption{The candidate $\times$ four-examination evidence matrix falsification protocol on a worked example. Three knowledge inputs build the matrix; row-level aggregation yields compatibility verdicts; the sufficiency check requires at least half of the candidates to be incompatible, otherwise evidence is re-examined; the most-supported compatible candidate is emitted in the single-line protocol format.}
\label{fig:matrix}
\end{figure*}

\subsection{Knowledge assets and dual-anchor injection}
\label{sec:knowledge}

All medical knowledge lives outside the code as data, in three forms. \textbf{Case database.} A local SQLite database (\texttt{tcm.db}) holds 8,147 case records with \emph{question/chain-of-thought/response/syndrome/formula} fields, aggregated from 5,903 TCM question--answer pairs, 738 acupuncture case records from a renowned physician's lectures, and 1,506 licensed-physician examination questions. The database additionally contains 58 structured classical formulas, 23 incompatibility pairs (the eighteen incompatibilities and nineteen mutual antagonisms canon enforced at prescription time, plus one extended pair), and 17 pregnancy-contraindication entries (14 banned and 3 caution-level herbs). A Chinese-bigram FTS5 full-text index supports the retrieval cascade of \ref{sec:retrieval}. \textbf{Distilled mapping datasets.} From 1,103 TCM question--answer pairs we distilled syndrome$\leftrightarrow$sign, formula-symptom$\leftrightarrow$syndrome, and prescription mappings. A separate coded dataset of 539 insomnia cases with tongue-sign inputs serves as the specialty pilot from which the data-driven differential clues below are distilled; other specialties follow the same procedure. \textbf{Declarative skill files.} Textbook standards (chiefly \emph{TCM Diagnostics}~\cite{tcmtextbook}) and the above datasets are refined into 23 Markdown skill files organized into six specialty skills (basic diagnosis, insomnia, hypertension, gynecology, pediatrics, acupuncture). Skills are routed by specialty priority and injected per lifecycle stage, so physicians can revise knowledge without touching code.

\label{sec:dualanchor}
At differentiation time, two complementary anchors are injected alongside the forced causal chain. \textbf{Theory anchor (methodological rules).} Key decision rules distilled from textbooks, e.g., \emph{a thick coating alone cannot exclude deficiency syndromes}; \emph{the absence of a greasy coating alone cannot exclude phlegm-heat}; \emph{a pale-red tongue alone cannot exclude liver-fire or phlegm-heat}; \emph{a pale-purple tongue alone does not diagnose blood stasis}; and a \emph{symptom-priority principle} when signs conflict. \textbf{Data anchor (statistical clues).} Conditional probabilities distilled from the 539-case specialty dataset, e.g., among white-coating samples about 47\% are phlegm-heat harassing the heart; teeth-marked tongues occur in 72\% of heart--gallbladder qi deficiency and 54\% of heart--spleen deficiency cases; greasy coating in 66\% of phlegm-heat cases. Syndrome prior distributions and explicit data-bias warnings are injected together, so that when textbook theory and sample statistics conflict the model still has a decision basis. \ref{sec:bayes-reading} gives a Bayesian reading of the two anchors: the data anchor supplies posterior fragments rather than likelihoods, the theory anchor acts as a constraint set rather than a prior, and naive-Bayes pooling links both to the matrix score of Section~\ref{sec:matrix}.

Injection budgets scale with model size: 12,000--18,000 characters for 4B models and 28,000--32,000 characters for 32B models. The model runs with thinking (reasoning) mode enabled, and reasoning blocks are stored separately from final answers for audit.

\subsection{Intent triage, information sufficiency, and active questioning}
\label{sec:sufficiency}

Incoming input first passes a \textbf{two-level intent triage} (a rule decision tree, then a two-token model classification into consultation / closing / off-topic), so that knowledge queries never enter the differentiation pipeline. Knowledge queries are served by a \textbf{three-level retrieval cascade} over the case database, and the top record's answer text is returned \emph{verbatim from the database without generation}---zero hallucination, at millisecond latency; only when retrieval fails does the model answer within a restricted budget, still outside the differentiation pipeline. \ref{sec:retrieval} details the triage rules, the cascade, the scoring rule, and a worked example.

Each consultation turn is scored for information sufficiency over the evidence categories collected so far (tongue signs, chief symptoms, pulse/course); reaching the sufficiency threshold advances directly to differentiation, and below threshold the agent names the missing dimensions and asks targeted follow-up questions. Anti-deadlock guards truncate repeated question-only outputs and honor explicit user stop signals, preventing infinite questioning. Session-level completeness is estimated as the coverage of chief complaint, symptoms, tongue, and pulse, and gates the COLLECT-exit transition of Section~\ref{sec:fsm}. \ref{sec:ig-reading} develops the information-theoretic reading of this rule---a greedy information-gain policy in disguise---together with its upgrade path.

\subsection{Lifecycle state machine and safety invariants}
\label{sec:fsm}

The consultation is modeled as a finite state machine with nine states: INIT, COLLECT, DIAGNOSE, PLAN, PRESCRIBE, REVIEW, FOLLOWUP, COMPLETED, and ABORTED. Transitions carry preconditions (symptom completeness $\geq 0.6$ to leave COLLECT; diagnosis confidence $\geq 0.7$ to leave DIAGNOSE); advancement decisions fuse token-level and semantic confidence with weights 0.3/0.7, with a turn-count fallback; illegal transitions are rejected outright and timeout guards prevent stalls. Table~\ref{tab:mapping} maps the method stages onto lifecycle states.

The nine-state machine is, formally, a finite automaton $(Q,\Sigma,\delta,q_{0},F)$ whose transition preconditions are guards, and this affords the framework a specification language rather than only an implementation. Illegal-transition rejection is a \emph{safety invariant}: in linear temporal logic~\cite{pnueli1977ltl}, the deployed checks enforce
\begin{align}
&\Box\,\bigl(\textsc{prescribe} \;\Rightarrow\; \Diamond\,(\textsc{review} \vee \textsc{aborted})\bigr),\label{eq:ltl1}\\
&\Box\,\bigl((\textsc{collect} \;\wedge\; \text{completeness} < 0.6 \;\wedge\; \neg\mathit{forced}) \;\Rightarrow\; \bigcirc\,\textsc{collect}\bigr),\label{eq:ltl2}
\end{align}
where $\Box$ is ``globally'', $\bigcirc$ ``next'', $\Diamond$ ``eventually'', and $\mathit{forced}$ marks the forced-advance paths of Section~\ref{sec:sufficiency} (turn-count fallback, deadlock truncation, timeout guards). The first invariant requires every \textsc{prescribe} to be eventually followed by a \textsc{review} or an abort, aligned with the PLAN$\rightarrow$PRESCRIBE$\rightarrow$REVIEW order of Table~\ref{tab:mapping}; the second is the COLLECT-exit precondition restricted to non-forced trajectories (stop-signal advancement needs no exclusion because it first raises completeness to the threshold). Both are enforced at runtime by construction and are checkable \emph{post hoc} on any logged trajectory, which is what makes the audit layer (Section~\ref{sec:audit}) a verification step rather than a log dump. Because the automaton is finite and the guards are decidable predicates, these properties are in principle model-checkable over the implementation's transition graph; we leave mechanized model checking to future work.

\begin{table}[t]
\centering
\caption{Mapping between method stages and lifecycle states.}
\label{tab:mapping}
\small
\resizebox{\columnwidth}{!}{%
\begin{tabular}{@{}ll@{}}
\toprule
Method stage & Lifecycle state(s) \\
\midrule
Intent triage, direct retrieval & outside the machine \\
Information collection, sufficiency & COLLECT \\
Hypothesis \& evidence construction & DIAGNOSE (prompt injection) \\
Matrix falsification \& output & DIAGNOSE (output protocol) \\
Causal-chain reasoning & DIAGNOSE (reasoning) \\
Action: plan, prescribe, review & PLAN $\rightarrow$ PRESCRIBE $\rightarrow$ REVIEW \\
Feedback \& consistency & FOLLOWUP $\rightarrow$ COMPLETED \\
Runtime protection \& audit & cross-cutting \\
Hard violations, fatal signals & ABORTED \\
\bottomrule
\end{tabular}%
}
\end{table}

\subsection{Consistency, feedback, recovery, and runtime robustness}
\label{sec:consistency}

\textbf{Consistency.} Confirmed syndromes are cached; a follow-up turn (shorter than 50 characters and carrying no new tongue/coating/pulse/symptom/course terms) triggers a full prompt rewrite---\emph{``syndrome X is confirmed; explain its rationale and differentiate it from the syndromes the user mentioned; do not re-diagnose''}---rather than a fresh differentiation, keeping conclusions stable across turns. Within a turn, multiple distinct syndrome conclusions trigger drift detection, and the last conclusion stands. Reasoning blocks are scored over six dimensions (four-examination flow, eight-principle differentiation, tongue diagnosis, pulse diagnosis, candidate comparison, TCM terminology; score $=$ hit classes$/6$) and logged for audit. Under stationary inputs the cache keeps the conclusion fixed; a genuine change of conclusion requires new evidence that pushes re-diagnosis across a decision boundary, and under the matrix protocol this evolution is monotone in the accumulated evidence---adding evidence refines the same logged matrix rather than restarting it.

\textbf{Feedback and recovery.} Runtime signals (tool failures, safety violations, etc.) are graded into four severities---info, warning, severe, fatal---mapped to six responses (ignore, warn, adjust, pause, roll back, abort). Hard-constraint signals are handled synchronously (target latency 50 ms), and fatal signals drive the state machine into ABORTED. Snapshots are saved automatically on every state change (SHA-256 integrity, last 10 versions retained); failures follow a three-level policy---automatic retry within 5 s, rollback to the latest safe snapshot within 30 s, or manual intervention.

\textbf{Runtime robustness.} Four cross-cutting subsystems keep long consultations stable on small models: layered context budgeting across permanent/session/task windows; streaming guardrails (hard token caps, repetition-loop and language-mixing detection); five-dimension health monitoring; and trajectory logging with privacy de-identification. \ref{sec:runtime} details the budgets and detection rules.

\subsection{Prescription safety hard constraints}
\label{sec:safety}

Before any prescription is emitted, layered mandatory checks run: set-intersection detection over the eighteen incompatibilities (14 pairs) and nineteen mutual antagonisms (8 pairs); pharmacopoeia dose ceilings for 15 toxic herbs~\cite{pharmacopoeia}; 14 pregnancy-banned herbs when the patient is pregnant; human-review flags for children under 14 and adults over 70; and outright refusal of non-TCM operations such as prescribing Western medication or recommending surgery. Violating prescriptions are intercepted and regenerated, never silently patched. During plan and prescription generation the model may invoke nine structured tools through seven textual markers; every call passes a four-stage audited pipeline (pre-call audit; sandboxed execution with timeout and retries; post-call output validation; audit logging), detailed in \ref{sec:tools}.

\subsection{Model-agnostic adapter and deployment}

A unified adapter (local Transformers / vLLM cluster / OpenAI-compatible APIs / mock) decouples the mechanism from any specific model, with automatic device detection, thinking/reasoning separation, logprob confidence, and parameter auto-scaling of knowledge-injection budgets across model sizes.

\section{Experiments}
\label{sec:experiments}

\subsection{Dataset}

We evaluate on \textbf{TCMEval-SDT}~\cite{tcmeval} (CC-BY 4.0), the public benchmark for TCM syndrome-differentiation reasoning. It contains 300 expert-annotated cases; only the 200-case training split includes gold answers (multi-select among 10 options A--J for both pathogenesis and syndrome), and is therefore used for evaluation. Gold syndrome sets contain 1--4 items and pathogenesis sets 1--6 items.

\subsection{Evaluation protocol}

Options are injected as single-quoted candidate markers into the agent input, triggering the matrix protocol; textual conclusions are extracted and mapped back to option letters. We report: the \textbf{official-script score} (the benchmark's \texttt{score\_proportional}: $|A \cap B| / (|A| + |B \setminus A|)$ per case, averaged); \textbf{hit rate} (fraction of cases with non-empty prediction $\cap$ gold); \textbf{exact match}; and \textbf{macro-F1} (Tables~\ref{tab:main} and~\ref{tab:aux}).

\paragraph{Component activation under the benchmark pathway} We state explicitly which framework components the benchmark exercises, since candidate injection bypasses parts of the loop. \emph{Activated and measured}: declarative knowledge injection and skill routing (Section~\ref{sec:knowledge}), the forced causal-chain constraint (Section~\ref{sec:methods}), the matrix falsification protocol with its sufficiency rule, and the single-line output protocol. \emph{Bypassed by design}: abductive candidate construction (the benchmark supplies the candidate set---a deliberate isolation of the verification stage); active questioning and sufficiency scoring (inputs are complete case narratives, single-turn); the action-stage tool layer and prescription safety checks (the benchmark tasks end at differentiation, before prescription); and the multi-turn feedback loop (each case runs in a fresh context). The headline numbers therefore measure the \emph{verification and causal-analysis core} of the loop, not the full five-stage pipeline. The remaining components are validated elsewhere: active questioning and mechanism execution in the 539-case pilot (Section~\ref{sec:audit}), and the safety layer, state machine, tool pipeline, and recovery in the 21-test regression suite (Section~\ref{sec:audit}).

\subsection{Evaluation hygiene}
\label{sec:hygiene}

Three declarations address evaluation-contamination concerns. \emph{(i) Knowledge--benchmark overlap.} We computed an 8-gram containment check between each of the 200 benchmark cases (Clinical Data + Clinical Information) and every knowledge text the agent can retrieve (8,147 \texttt{tcm.db} question fields and all 31 skill/knowledge Markdown files, 8,178 texts total): the maximum per-case containment is 1.5\% (mean 0.1\%), i.e., no benchmark case shares substantive text with the knowledge base. \emph{(ii) Knowledge-iteration history.} All knowledge and skill files were frozen before benchmark contact. The two documented iterations of the differential data table (v0.1$\rightarrow$v0.2) and the differential guide (v0.2$\rightarrow$v0.3) were driven exclusively by the 539-case insomnia pilot (a private dataset disjoint from the benchmark) and completed before the first benchmark campaign began; post-hoc benchmark failure analyses in this paper were used for interpretation only and were never fed back into knowledge, prompts, or thresholds. Consistently, the ablation evidence runs opposite to any data-snooping hypothesis: the insomnia-tuned guide \emph{hurt} benchmark performance (Section~\ref{sec:contamination})---contamination was detected and reported, not exploited. \emph{(iii) Hyperparameters.} All protocol thresholds (completeness 0.6, diagnosis confidence 0.7, injection budgets) were fixed at development time and never tuned on the benchmark; the only benchmark-specific adaptations are the disclosed evaluation-mode output protocol (multi-answer \texttt{[Final Syndrome]} format) and the locked base-skill routing.

\subsection{Statistical analysis}
\label{sec:stats}

To separate genuine effects from decoding variance, every configuration that supports sampling is evaluated in \textbf{three independent repetitions} over the same 200 cases per task. We use \emph{run} and \emph{repetition} interchangeably, and \emph{case-run} to denote one (case, repetition) pair. We report run-level mean $\pm$ std, and test paired comparisons on per-case scores averaged across repetitions (unit of analysis: the case, $n{=}200$) using the two-sided \textbf{Wilcoxon signed-rank test}, complemented by a \textbf{paired bootstrap} 95\% confidence interval of the mean difference (10,000 resamples, fixed seed). Raw $p$-values are reported throughout. For confirmatory interpretation we apply a Bonferroni correction within the same-model mechanism-gain family (5 models $\times$ 2 tasks $=10$ tests, $\alpha{=}0.005$); cross-model comparisons mix repetition counts and configurations, so they are reported \emph{descriptively, without $p$-values}.

\textbf{Two inferential levels.} Case-level tests average over repetitions and answer the case-population question, but they can understate uncertainty where run-level variance is material (for K3 zero-shot, run means span 0.245--0.636). We therefore additionally report a conservative \textbf{run-level} check---Welch's $t$-test over the three run means per side (unit: the run, $n{=}3$)---for every three-repetition comparison (Table~\ref{tab:sig}). The two levels answer different questions and we report both; where they disagree we say so explicitly. Run-level inference at $n{=}3$ is low-powered by construction, so a non-significant run-level $p$ is weak evidence rather than evidence of absence. Variance-ratio claims are reported with 95\% CIs (variance-ratio $F$-test, df 2,2). We write ``n.s.''\ for non-significant, and reserve ``significant'' for $p$ below the stated threshold.

\subsection{Configurations}

\begin{itemize}
  \item \textbf{Qwen3-4B / Qwen3-32B}~\cite{qwen3}: zero-shot direct answering, and with the full framework (local vLLM);
  \item \textbf{PT-4B + agent}: a TCM post-trained 4B model with the full framework;
  \item \textbf{DeepSeek-V4-Flash}~\cite{deepseek}: zero-shot and with the framework (official API);
  \item \textbf{DeepSeek-V4-Pro}: zero-shot and with the framework (official API);
  \item \textbf{K3 zero-shot / K3 + agent}: Kimi K3 (2026 release)~\cite{kimi} without and with the framework.
\end{itemize}

Paper baselines (Gemini 1.5 Pro $\approx$0.475 syndrome, ChatGLM-130B $\approx$0.532 pathogenesis, converted from the paper's weighted sums to per-case means) are shown for reference; note they are 2023--2024-era zero-shot evaluations on a random 50-case subset.

\subsection{Ablation design}

Starting from the full agent (\textbf{G0}, Qwen3-4B), we remove one component per group: \textbf{G1} no matrix-and-format instruction block (the falsification-matrix requirement, the candidate-bounded single-line emission mandate, and the associated prohibitions are one prompt block and are removed jointly; knowledge injection is retained); \textbf{G2} no evidence checklist/authoritative standards; \textbf{G3} no differential-diagnosis guide (insomnia-tuned); \textbf{G4} no skill-knowledge injection at all. Each group is re-evaluated on both tasks (200 cases each, three repetitions). A separate \emph{format-only} baseline---candidate markers and the mandatory single-line output format, but no matrix and no knowledge injection---separates the format layer from the content layer that G1 removes jointly, and is run on both the budget-matched Qwen3-4B and DeepSeek-V4-Flash. On Qwen3-4B the factorial is completed by two further arms: a \emph{no-matrix} arm (\textbf{G5}, content + format, no matrix requirement) and a \emph{no-format} arm (\textbf{G6}, content + matrix, no emission mandate).

\subsection{Implementation details}

Local/server runs use Qwen3-4B (fp16, thinking enabled, max 2048 new tokens) and Qwen3-32B (vLLM, thinking enabled, max 4096 new tokens), both from unchanged local checkpoints (Qwen3 release of 2025); the DeepSeek runs use the official API. The K3 run uses the Kimi coding API with temperature fixed at 1.0 (the only value K3 accepts). All evaluations resume from checkpoints and log every case's full reasoning for audit. On budget symmetry: zero-shot baselines follow the benchmark paper's protocol at an 8,192-token budget, while agent runs receive the larger budgets the protocol's matrix output requires; the resulting asymmetry means full-set gains contain termination rescue, which is why the truncation-free decomposition of Section~\ref{sec:significance}---not the raw full-set contrast---is our primary controlled evidence. To control a temperature confound in the Qwen3-32B comparison (zero-shot at temperature 0 vs.\ agent at 0.6), the 32B agent was additionally evaluated at temperature 0. For unambiguous cross-reference, every reported number is tied to a named campaign: \textbf{C1} = initial runs (single repetition, temperature 0.6--0.7, 16,384-token budget, \texttt{/no\_think} for flash); \textbf{C2} = repetition campaign (three repetitions, flash at 32,768 tokens without the trigger; K3/32B at their stated budgets); \textbf{C3} = budget-matched greedy runs (temperature 0, 8,192-token budget, vLLM; three repetitions for the Qwen3-4B agent and the per-component ablation groups, single repetition for PT-4B, the format-only baselines, and the $\kappa$ ablation). \ref{sec:budget-notes} records two engineering notes on generation budgets and API-side behavior across the campaigns, including a silent endpoint upgrade that we detected and excluded from pooling.

\paragraph{PT-4B training} PT-4B is a TCM post-trained variant of Qwen3-4B. Starting from the unchanged Qwen3-4B base checkpoint, we applied supervised fine-tuning (SFT) on a self-curated TCM corpus---one subset of which carries chain-of-thought rationales---followed by group relative policy optimization (GRPO) reinforcement post-training, following the recipes of DeepSeekMath~\cite{deepseekmath} and DeepSeek-R1~\cite{deepseekr1}. A first version (V1) was trained in April 2025; all experiments in this paper use the V2 upgrade checkpoint of 2025-10-09. Post-training data are drawn from our private TCM datasets and are disjoint from the TCMEval-SDT benchmark (Section~\ref{sec:hygiene}).

\section{Results}
\label{sec:results}

\subsection{The boundary condition: protocol gains are sign-conditional on zero-shot output discipline}
\label{sec:significance}

The cleanest evidence for the mechanism's contribution is the same-model comparison: a base model zero-shot versus the same model with our agent, where no component of the base model changes---only the reasoning protocol. Table~\ref{tab:main} reports official-script scores and hit rates across five base models; Figure~\ref{fig:results} visualizes the comparison; Table~\ref{tab:discipline} profiles each model's zero-shot output discipline; Table~\ref{tab:sig} reports the paired significance tests.

\begin{figure*}[!t]
\centering
\includegraphics[width=\textwidth]{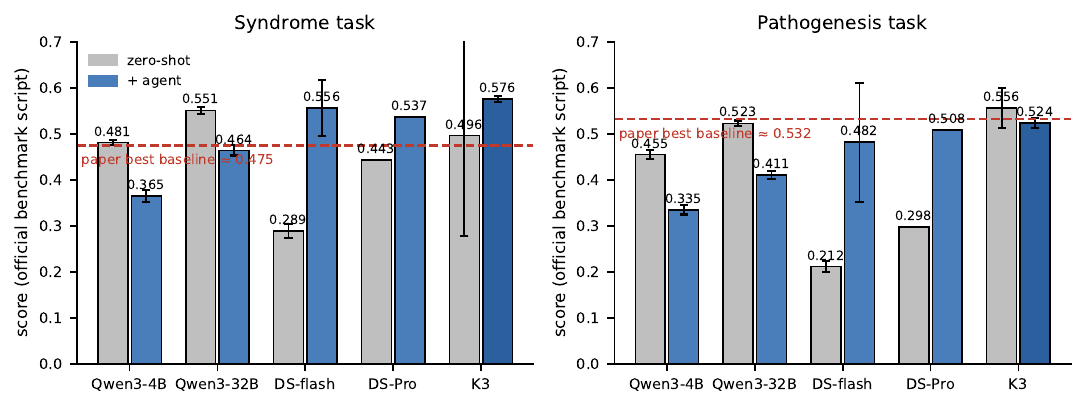}
\caption{Official-script scores on TCMEval-SDT (our own runs, n=200 per task). Paired bars show zero-shot vs.\ +agent for each base model; error bars mark the standard deviation over three repetitions where available. Dashed lines mark the benchmark paper's best reported zero-shot baselines (2023--2024 era, 50-case subset). Note the K3 pair: the zero-shot error bar ($\pm0.218$, descriptive spread over the $n{=}3$ runs) versus the agent's ($\pm0.007$) is the visual evidence for the stabilization claim---the protocol reduces run-to-run standard deviation by an order of magnitude (point estimate 31-fold).}
\label{fig:results}
\end{figure*}

\begin{table}[t]
\centering
\caption{Official-script scores on TCMEval-SDT (our own runs, n=200 per task). Values are mean $\pm$ std over 3 repetitions; $\dagger$ marks single runs; $^{\ddagger}$ marks single budget-matched greedy runs (8,192-token budget). Paper baselines are 2023--2024-era zero-shot on a 50-case subset, converted to per-case means. Bold values mark per-column maxima; the bold row label marks the protocol configuration.}
\label{tab:main}
\small
\resizebox{\columnwidth}{!}{%
\begin{tabular}{@{}llcc@{}}
\toprule
Model & Variant & Syndrome & Pathogen. \\
\midrule
Qwen3-4B & zero-shot & 0.481 $\pm$ 0.005 & 0.455 $\pm$ 0.010 \\
Qwen3-4B & + agent & 0.365 $\pm$ 0.013 & 0.335 $\pm$ 0.010 \\
PT-4B & zero-shot & 0.473$^{\ddagger}$ & 0.457$^{\ddagger}$ \\
PT-4B & + agent & 0.362$^{\ddagger}$ & 0.345$^{\ddagger}$ \\
\midrule
Qwen3-32B & zero-shot & 0.551 $\pm$ 0.007 & 0.523 $\pm$ 0.006 \\
Qwen3-32B & + agent & 0.464 $\pm$ 0.011 & 0.411 $\pm$ 0.009 \\
\midrule
DeepSeek-V4-Flash & zero-shot & 0.289 $\pm$ 0.015 & 0.212 $\pm$ 0.012 \\
DeepSeek-V4-Flash & + agent & 0.556 $\pm$ 0.061 & 0.482 $\pm$ 0.129 \\
\midrule
DeepSeek-V4-Pro & zero-shot & 0.443$^\dagger$ & 0.298$^\dagger$ \\
DeepSeek-V4-Pro & + agent & 0.537$^\dagger$ & 0.508$^\dagger$ \\
\midrule
Kimi K3 & zero-shot & 0.496 $\pm$ 0.218 & \textbf{0.556} $\pm$ 0.043 \\
Kimi K3 & \textbf{+ agent} & \textbf{0.576} $\pm$ 0.007 & 0.524 $\pm$ 0.011 \\
\midrule
\emph{Paper best baseline} & & \emph{$\approx$0.475} & \emph{$\approx$0.532} \\
\bottomrule
\end{tabular}%
}
\end{table}

\begin{table}[t]
\centering
\caption{Auxiliary metrics (n=200 per task, C2/C3 campaigns as in Table~\ref{tab:main}): exact match (EM, \%), macro-F1, and set hit rate (\%). The hit--score gap quantifies the over-prediction failure mode of Section~\ref{sec:failures}: high hit with a modest official-script score indicates over-selection (e.g., 4B zero-shot: 87.0\% hit vs.\ 0.481 score).}
\label{tab:aux}
\small
\resizebox{\columnwidth}{!}{%
\begin{tabular}{@{}llcccccc@{}}
\toprule
& & \multicolumn{3}{c}{Syndrome} & \multicolumn{3}{c}{Pathogen.} \\
\cmidrule(lr){3-5}\cmidrule(lr){6-8}
Model & Variant & EM & F1 & hit & EM & F1 & hit \\
\midrule
Qwen3-4B & zero & 20.0 & 0.588 & 87.0 & 9.5 & 0.578 & 92.5 \\
Qwen3-4B & agent & 16.5 & 0.465 & 70.0 & 7.0 & 0.425 & 72.5 \\
PT-4B & zero & 17.5 & 0.589 & 89.0 & 8.0 & 0.585 & 92.5 \\
PT-4B & agent & 15.0 & 0.444 & 67.0 & 8.5 & 0.452 & 77.0 \\
Qwen3-32B & zero & 29.7 & 0.644 & 88.0 & 16.7 & 0.636 & 92.0 \\
Qwen3-32B & agent & 20.2 & 0.563 & 83.0 & 11.3 & 0.523 & 83.3 \\
DS-Flash & zero & 22.8 & 0.310 & 36.0 & 10.7 & 0.244 & 31.8 \\
DS-Flash & agent & 30.9 & 0.661 & 88.9 & 17.3 & 0.615 & 87.1 \\
DS-Pro & zero & 32.0 & 0.484 & 58.0 & 12.0 & 0.352 & 47.5 \\
DS-Pro & agent & 25.0 & 0.636 & 89.0 & 12.5 & 0.629 & 93.0 \\
K3 & zero & 28.5 & 0.576 & 80.8 & 23.3 & 0.657 & 90.5 \\
K3 & agent & 33.5 & 0.662 & 87.8 & 17.0 & 0.639 & 92.0 \\
\bottomrule
\end{tabular}%
}
\end{table}

\begin{table}[t]
\centering
\caption{Output-discipline profile of each base model, measured on protocol-free zero-shot outputs (200 cases per task), against the protocol's gain $\Delta$ (agent $-$ zero-shot; CIs and $p$-values in Table~\ref{tab:sig}). \textbf{Answer-extraction rate}: fraction of cases whose output yields a machine-extractable answer under the benchmark's official extraction rule, computed on the first repetition of each arm (DS-Pro's only repetition), so the rate is comparable across arms---for thinking models this coincides with termination within budget, and it subsumes format non-compliance. \textbf{Over-prediction ratio}: per-case mean of $|\mathrm{pred}|/|\mathrm{gold}|$. \textbf{Run std}: standard deviation of run means across three repetitions. \textbf{Cross-run agreement}: fraction of cases with identical predicted label sets across all three runs. Run std and self-consistency require three repetitions and are unavailable for the single-run zero-shot configuration (DS-Pro).}
\label{tab:discipline}
\small
\resizebox{\columnwidth}{!}{%
\begin{tabular}{@{}llccccc@{}}
\toprule
Model & Task & Answer rate (\%) & Over-pred. & Run std & Cross-run agr.\ (\%) & $\Delta$ \\
\midrule
DS-Flash & syn. & 38.5 & 0.42 & 0.015 & 60.0 & $+0.267$ \\
DS-Flash & pat. & 33.0 & 0.42 & 0.012 & 62.5 & $+0.271$ \\
DS-Pro & syn. & 62.5 & 0.75 & --- & --- & $+0.094$ \\
DS-Pro & pat. & 51.0 & 0.66 & --- & --- & $+0.210$ \\
K3 & syn. & 99.5 & 1.38 & 0.218 & 10.5 & $+0.080$ \\
K3 & pat. & 97.0 & 1.43 & 0.043 & 37.0 & $-0.032$ \\
Qwen3-32B & syn. & 100 & 1.29 & 0.007 & 53.5 & $-0.087$ \\
Qwen3-32B & pat. & 100 & 1.28 & 0.006 & 38.5 & $-0.112$ \\
Qwen3-4B & syn. & 99.5 & 1.46 & 0.005 & 82.0 & $-0.116$ \\
Qwen3-4B & pat. & 100 & 1.42 & 0.010 & 69.0 & $-0.120$ \\
\bottomrule
\end{tabular}%
}
\end{table}

\begin{figure*}[t]
\centering
\includegraphics[width=\textwidth]{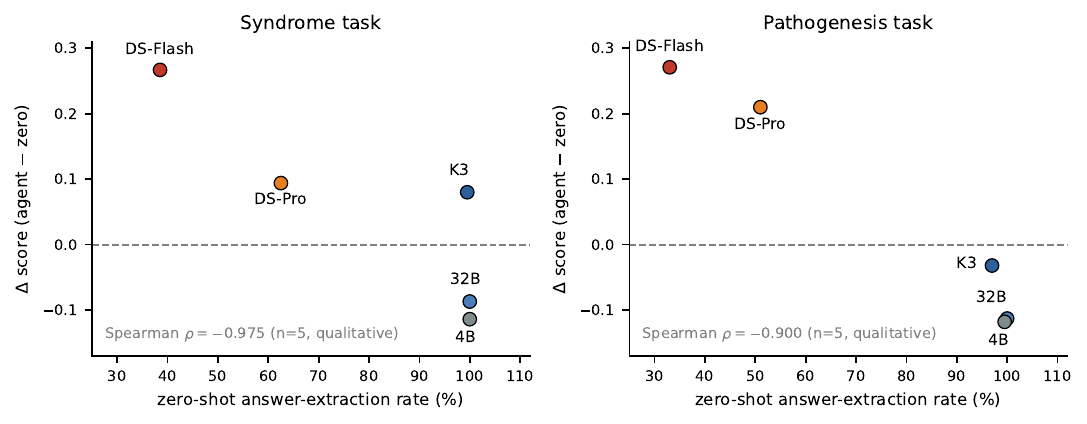}
\caption{The protocol's gain against the zero-shot answer-extraction rate, per base model and task. Each model the protocol helps carries a distinct measurable failure mode (non-termination for flash, partial non-termination for Pro, instability for K3); the two models it hurts show none. Rank correlations are reported qualitatively at $n{=}5$.}
\label{fig:discipline}
\end{figure*}

\begin{table}[t]
\centering
\caption{Paired tests of the mechanism's gain (zero-shot $\rightarrow$ +agent) per base model and task: per-case mean difference $\Delta$ with bootstrap 95\% CI and Wilcoxon $p$ (n=200 cases). Raw $p$-values; the family-wise Bonferroni threshold for this 10-test family is 0.005. The Qwen3-4B comparison is budget-matched (both sides greedy with an 8,192-token budget, three repetitions each on the same vLLM backend). DS-Pro rows are single-repetition (case-level tests only) and are descriptive under the two-level standard of Section~\ref{sec:stats}. The 4B negative result cannot be attributed to truncation. \textbf{Run-level $p$}: conservative replication check---Welch's $t$-test over the three run means per side (Section~\ref{sec:stats}), uncorrected; ``---'' for configurations with a single-run side.}
\label{tab:sig}
\small
\resizebox{\columnwidth}{!}{%
\begin{tabular}{@{}llcccc@{}}
\toprule
Model & Task & $\Delta$ & 95\% CI & $p$ (case) & $p$ (run) \\
\midrule
Qwen3-4B & syndrome & $-$0.116 & [$-$0.167, $-$0.064] & 2.6e-05 & 1.7e-03 \\
Qwen3-4B & pathogen. & $-$0.120 & [$-$0.157, $-$0.084] & 8.1e-10 & 1.0e-04 \\
DS-Flash & syndrome & +0.267 & [+0.226, +0.307] & 6.1e-22 & 1.3e-02 \\
DS-Flash & pathogen. & +0.271 & [+0.234, +0.307] & 2.0e-24 & 6.7e-02 \\
DS-Pro & syndrome & +0.094 & [+0.039, +0.149] & 4.2e-03 & --- \\
DS-Pro & pathogen. & +0.210 & [+0.131, +0.295] & 1.4e-12 & --- \\
K3 & syndrome & +0.080 & [+0.047, +0.113] & 3.5e-06 & 0.59 \\
K3 & pathogen. & $-$0.032 & [$-$0.061, $-$0.004] & 3.1e-02 & 0.32 \\
Qwen3-32B & syndrome & $-$0.087 & [$-$0.126, $-$0.048] & 5.0e-05 & 9e-04 \\
Qwen3-32B & pathogen. & $-$0.112 & [$-$0.147, $-$0.079] & 2.9e-09 & 1e-04 \\
\bottomrule
\end{tabular}%
}
\end{table}

\begin{table}[t]
\centering
\caption{Rescue/terminating decomposition of the zero-shot$\rightarrow$agent gain on the DeepSeek models (paired per-case differences, 95\% CI, Wilcoxon $p$). \textbf{Reliably terminating}: zero-shot yields an extractable answer in \emph{every} repetition (DS-Pro: single run)---a strictly stronger condition than the per-repetition answer-extraction rate of Table~\ref{tab:discipline}, which is why the flash subset (43/200 cases) is smaller than that rate (38.5\%) suggests. \textbf{Rescue-prone}: the complement. K3 and the Qwen models terminate at $\geq$97\% zero-shot answer rate, so for them the full-set contrast in Table~\ref{tab:sig} is already a terminating-case estimate. For DS-Pro the agent-side budget is 16,384 tokens and roughly half of its terminating case-runs exceed the 8k-equivalent line, so its rescue-prone gains retain a budget component that the post-hoc proxy (Appendix~\ref{sec:budget-neutralization}) can only partially exclude. DS-Pro rows are single-repetition and hence descriptive under the two-level standard of Section~\ref{sec:stats}. Full-set rows duplicate Table~\ref{tab:sig} for readability.}
\label{tab:decomp}
\small
\resizebox{\columnwidth}{!}{%
\begin{tabular}{@{}llccccc@{}}
\toprule
Model & Subset & $n$ & Zero & Agent & $\Delta$ (95\% CI) & $p$ \\
\midrule
DS-Flash syn. & full & 200 & 0.289 & 0.556 & $+0.267$ [$+0.226, +0.307$] & 6.1e-22 \\
 & reliably terminating & 43 & 0.866 & 0.781 & $-0.085$ [$-0.133, -0.037$] & 2.0e-03 \\
 & rescue-prone & 157 & 0.132 & 0.494 & $+0.363$ [$+0.325, +0.401$] & 2e-25 \\
\midrule
DS-Flash pat. & full & 200 & 0.212 & 0.482 & $+0.271$ [$+0.233, +0.308$] & 2e-24 \\
 & reliably terminating & 32 & 0.709 & 0.622 & $-0.087$ [$-0.140, -0.034$] & 3.1e-03 \\
 & rescue-prone & 168 & 0.117 & 0.456 & $+0.339$ [$+0.304, +0.374$] & 4e-27 \\
\midrule
DS-Pro syn. & full & 200 & 0.443 & 0.537 & $+0.094$ [$+0.039, +0.149$] & 4.2e-03 \\
 & reliably terminating & 125 & 0.708 & 0.598 & $-0.111$ [$-0.164, -0.058$] & 2.1e-04 \\
 & rescue-prone & 75 & 0.000 & 0.435 & $+0.435$ [$+0.368, +0.502$] & 6e-12 \\
\midrule
DS-Pro pat. & full & 200 & 0.298 & 0.508 & $+0.210$ [$+0.159, +0.260$] & 1e-12 \\
 & reliably terminating & 102 & 0.584 & 0.534 & $-0.050$ [$-0.097, -0.002$] & 6.0e-02 \\
 & rescue-prone & 98 & 0.000 & 0.480 & $+0.480$ [$+0.429, +0.531$] & 1e-16 \\
\bottomrule
\end{tabular}%
}
\end{table}

Four controlled within-model contrasts, each holding model and configuration fixed, establish the boundary condition.

\textbf{Rescue where discipline fails.} On DeepSeek-V4-Flash (DS-Flash)---whose zero-shot output fails to yield an answer on 61.5\%/67.0\% of cases (Table~\ref{tab:discipline})---the budget-corrected three-repetition campaign yields the largest significant gain: $+0.267$ on syndrome (95\% CI $[+0.226, +0.307]$, $p=6.1{\times}10^{-22}$) and $+0.271$ on pathogenesis ($p=2.0{\times}10^{-24}$). On DeepSeek-V4-Pro (DS-Pro)---single repetition on both sides, hence descriptive under the two-level standard of Section~\ref{sec:stats}---the gain is positive on both tasks ($+0.094$, case-level $p=4.2{\times}10^{-3}$; $+0.210$, $p=1.4{\times}10^{-12}$), with only 1.5\% empty outputs on the agent side; being run-level unreplicated, we report it as descriptive rather than headline evidence. Table~\ref{tab:decomp} decomposes these gains. For each DeepSeek model we split the 200 cases into a \textbf{reliably terminating subset}---cases whose zero-shot output yields an extractable answer in every repetition---and the complementary \textbf{rescue-prone subset}. On rescue-prone cases the protocol's effect is large and positive (flash, three repetitions: $+0.363$/$+0.339$; DS-Pro, single repetition, descriptive: $+0.435$/$+0.480$): it converts non-answers into answers. On reliably terminating cases the effect flips sign and is small: $-0.085/-0.087$ (flash, $p{=}0.002/0.003$) and $-0.111/-0.050$ (Pro, $p{=}2{\times}10^{-4}$/n.s.; single repetition, descriptive). The full-set gains on DeepSeek models are therefore \emph{entirely} termination rescue; no positive remainder survives on the truncation-free subset.

\paragraph{Post-hoc budget neutralization via elapsed-time certification} One objection to the rescue attribution remains: the flash agent runs at 32,768 tokens against the zero-shot baseline's 8,192, so part of the rescue could reflect budget rather than discipline. Per-case token counts were not archived, but wall-clock times were, and they afford a \emph{rate-free} certification: the fastest zero-shot case-run that exhausted the 8,192-token budget took 66.9\,s (syndrome) / 64.6\,s (pathogenesis), so any agent case-run faster than that bound necessarily consumed fewer than 8,192 tokens. 72\%/75\% of terminating agent case-runs pass the bound (median 49\,s/51\,s). On rescue-prone cases certified under the strictest reading---the agent terminates below the bound in \emph{every} repetition---the gain persists or grows: $+0.464$ on syndrome (95\% CI $[+0.332,+0.597]$, $p{=}3.2{\times}10^{-4}$, $n{=}17$) and $+0.237$ on pathogenesis (CI $[+0.140,+0.335]$, $p{=}0.036$, $n{=}7$); that the syndrome gain \emph{increases} when high-consumption cases are excluded is the opposite of what a budget explanation predicts. A well-powered typical-case certification (median repetition below the bound; $n{=}136/87$) gives $+0.369/+0.399$ ($p{=}3.6{\times}10^{-22}$/$2.1{\times}10^{-15}$). We do not claim full neutralization: about a quarter of terminating agent case-runs exceed the 8k-equivalent line, and for DS-Pro (16,384-token agent budget) roughly half do, leaving certified subsets too small to be informative ($n{=}2/12$)---Pro's rescue-prone gains therefore retain a budget component this proxy can only partially exclude. Calibration, subset definitions, sensitivity across certification variants, and the non-termination pathology evidence are in Appendix~\ref{sec:budget-neutralization}.

On flash the protocol also raises the set-hit rate from 32.5\%/29.5\% to 66.0\%/64.5\% (syndrome/pathogenesis; \textbf{C1}, the August-2026 initial run---the corresponding C2 figures in Table~\ref{tab:aux} differ because the API-side model drifted between campaigns, \ref{sec:budget-notes}).

\textbf{Stabilization of a high-variance model.} K3's forced temperature 1.0 makes its zero-shot output unstable (run means span 0.245--0.636 on syndrome). Under the protocol the run-to-run standard deviation drops from 0.218 to 0.007---a point-estimate factor of 31 (variance-ratio $F$-test $p{=}0.002$, 95\% CI [5, 200]; pathogenesis: 3.9-fold, n.s., $p{=}0.12$, CI [0.6, 25]). The wide interval reflects three runs per side; we report the effect as an order-of-magnitude reduction, not a precise factor. On score, K3 syndrome gains $+0.080$ at the case level ($p{=}3.5{\times}10^{-6}$) but the conservative run-level check is non-significant ($p{=}0.59$, $n{=}3$ runs); pathogenesis shows a small nominal decrease ($-0.032$, n.s.\ after Bonferroni). We therefore claim stabilization as the robust K3 finding, and the score gain as case-level evidence only.

\textbf{Significant cost where discipline is internalized.} On Qwen3-32B---whose zero-shot output terminates at 100\% and scores 0.551/0.523, near the benchmark's best reported baselines---the agent scores \emph{significantly below} zero-shot (syndrome $-0.087$, $p=5.0{\times}10^{-5}$; pathogenesis $-0.112$, $p=2.9{\times}10^{-9}$). A temperature control (agent at temperature 0 vs.\ 0.6) rules out decoding as the cause: syndrome is unchanged ($p=0.31$), and pathogenesis at temperature 0 degrades further (0.201 vs.\ 0.411, $p=4{\times}10^{-14}$). On Qwen3-4B the budget-matched greedy comparison (both sides at 8,192 tokens) replicates the cost ($-0.116$, $p=2.6{\times}10^{-5}$; $-0.120$, $p=8.1{\times}10^{-10}$), so the negative result cannot be attributed to truncation. The TCM post-trained variant PT-4B shows no significant post-training effect on either side ($|\Delta|<0.03$, all n.s.), and the protocol's cost replicates on it as well ($-0.111/-0.112$, $p{<}10^{-4}$)---base-model behavior, not domain adaptation, dominates. PT-4B is moreover a 4B with reasoning chains internalized (its SFT corpus includes chain-of-thought rationales); that even a CoT-trained model is changed so little by the protocol suggests the protocol's leverage point is not the reasoning chain itself.

\textbf{What is controlled and what is not.} The cross-model pattern---gains ordered against zero-shot discipline in Table~\ref{tab:discipline} and Figure~\ref{fig:discipline}---is qualitative only: the points were measured under per-model best-effort configurations (different generation budgets, decoding temperatures, and campaign dates, including an API-side behavior drift for flash), so no controlled dose--response curve can be drawn across them. What \emph{is} controlled evidence are the four within-model contrasts above. Read together, they support the narrower claim that \textbf{the mechanism helps where zero-shot output discipline is poor (non-termination, format failure, high variance) and costs where discipline is already strong}; the dose--response quantification across a unified protocol is future work. A single run can be badly misleading in either direction: K3's first run suggested a +158\% pathogenesis gain, which the three-repetition campaign replaced with a small nominal decrease---single-run comparisons are anecdotes; repeated runs with paired tests are evidence.

\subsection{Mechanistic attribution: format--content complementarity}
\label{sec:ablation}

Which layer carries these effects? The G1 ablation (Table~\ref{tab:ablation}) removes the matrix requirement and the output-format mandate jointly (both live in a single instruction block in the deployed prompt), so it cannot separate them. The \emph{format-only} baseline---candidate markers and the mandatory single-line output format, but no matrix and no knowledge injection---closes this gap on both the budget-matched 4B and DeepSeek-V4-Flash. Together with G1 (content retained; matrix+format block removed), the 4B cells form a two-factor pattern (syndrome/pathogenesis): zero-shot 0.481/0.455; format-only 0.240/0.172 ($-0.241/-0.283$, $p{<}10^{-12}$); G1 0.388/0.319 ($-0.093/-0.136$ vs.\ zero-shot, $p{=}0.004$/$2.1{\times}10^{-8}$); full protocol 0.365/0.335. The structure is \emph{not} additive. The format layer costs or is neutral under both content conditions ($-0.241/-0.283$ without content; $-0.022/+0.015$ with it, G1$\rightarrow$full, n.s.), and the content bundle (falsification matrix + knowledge injection + causal chain) is significantly harmful without the format ($-0.093/-0.136$) yet strongly positive with it ($+0.126/+0.162$, $p{=}1.4{\times}10^{-5}$/$3.7{\times}10^{-10}$)---a sign-flipping interaction. On flash the chain reads zero-shot 0.289/0.212 $\rightarrow$ format-only 0.413/0.204 ($+0.124/-0.007$; $p{=}10^{-5}$/n.s.) $\rightarrow$ full 0.556/0.482 ($+0.143/+0.278$, $p{<}10^{-7}$).

Three conclusions follow, each narrower than an additive two-layer story. (i) \textbf{The format/candidate-bounding layer is sign-conditional}: it rescues the truncation-prone model (flash syndrome $+0.124$; answer-extraction rate 38.5\%$\rightarrow$75.0\% syndrome, 33.0\%$\rightarrow$49.5\% pathogenesis) but costs the disciplined one (4B), suppressing free causal reasoning into candidate oscillation; the 4B negative headline of Section~\ref{sec:significance} is attributable primarily to this layer. (ii) \textbf{The causal chain and the format layer are complements, not additive layers}: on 4B the causal chain pays off only jointly with the format, and the sign flips across the two content conditions; the matrix and knowledge marginals are separately nil (see (iii)). (iii) \textbf{The matrix's own marginal contribution to score is nil-to-negative on this model.} Two independent estimates agree: with format present, full $-$ (content+format, no matrix) $= -0.027/-0.008$ ($p{=}0.12/0.57$, n.s.); without format, (content+matrix, no format) $-$ (content only) $= -0.054/-0.036$ ($p{=}0.018/0.17$). The bundle's positive residual therefore comes from the causal chain conditional on format---not from the matrix requirement and not from knowledge injection: the residual with the matrix excluded is $+0.153/+0.171$ over format-only ($p{<}10^{-5}$), and removing \emph{all} injected knowledge from the full protocol leaves scores unchanged ($-0.001/+0.019$, n.s.; G4, Table~\ref{tab:ablation}). The ordered attribution on this model thus reads: format-only $\rightarrow$ $+$causal chain (the main effect) $\rightarrow$ $+$matrix ($\approx$0) $\rightarrow$ $+$knowledge ($\approx$0). The format layer's own marginal, with content and matrix present, is $+0.031/+0.051$ (n.s./$p{=}8{\times}10^{-4}$). Where the matrix-and-format block \emph{does} act measurably is the candidate space itself (Table~\ref{tab:factorial}): free-form zero-shot hit rate is 87.5\%, dropping to 59.0\% with content alone, and recovering to 67.5--74.2\% as the matrix and the emission format are added---the protocol's demonstrated functions are candidate-space enforcement, the cell-level audit trail, and the misinformation detection of Section~\ref{sec:misinfoexp}, not benchmark score on this model.

\begin{table}[t]
\centering
\caption{The completed three-factor factorial on budget-matched Qwen3-4B (three repetitions per arm, n=200 per task). The content bundle is knowledge injection + causal chain; ``format'' is the candidate-bounded single-line emission mandate; ``matrix'' is the falsification-matrix requirement.}
\label{tab:factorial}
\small
\resizebox{\columnwidth}{!}{%
\begin{tabular}{@{}lccc@{}}
\toprule
Arm (content / matrix / format) & Syn. & Pat. & Hit syn./pat.\ (\%) \\
\midrule
zero-shot ($-$/$-$/$-$) & 0.481 $\pm$ 0.005 & 0.455 $\pm$ 0.010 & 87.5 / 92.0 \\
format only ($-$/$-$/+) & 0.240 & 0.172 & 38.0 / 38.0 \\
content only (G1, +/$-$/$-$) & 0.388 $\pm$ 0.009 & 0.319 $\pm$ 0.007 & 59.0 / 66.7 \\
no matrix (G5, +/$-$/+) & 0.392 $\pm$ 0.007 & 0.343 $\pm$ 0.004 & 67.3 / 72.8 \\
no format (G6, +/+/$-$) & 0.334 $\pm$ 0.007 & 0.284 $\pm$ 0.002 & 67.5 / 70.5 \\
full protocol (+/+/+) & 0.365 $\pm$ 0.013 & 0.335 $\pm$ 0.010 & 67.5 / 74.2 \\
\bottomrule
\end{tabular}%
}
\end{table}

\begin{table}[t]
\centering
\caption{Component ablations (Qwen3-4B, C3 budget-matched greedy configuration, 8,192-token budget, vLLM; three repetitions per group, n=200 per task). Official score mean $\pm$ std; $\Delta$ and hit-rate change vs.\ G0 (paired Wilcoxon; n.s. = $p>0.05$).}
\label{tab:ablation}
\small
\resizebox{\columnwidth}{!}{%
\begin{tabular}{@{}lcccccc@{}}
\toprule
& \multicolumn{2}{c}{Official-script score} & \multicolumn{2}{c}{$\Delta$ vs.\ G0 (CI, $p$)} & \multicolumn{2}{c}{Hit $\Delta$ (pp)} \\
\cmidrule(lr){2-3}\cmidrule(lr){4-5}\cmidrule(lr){6-7}
Group & Syn. & Pat. & Syn. & Pat. & Syn. & Pat. \\
\midrule
G0 full agent & 0.365 $\pm$ 0.013 & 0.335 $\pm$ 0.010 & --- & --- & 67.5 & 74.2 \\
G1 $-$ matrix+format block & 0.388 $\pm$ 0.009 & 0.319 $\pm$ 0.007 & +0.022 (n.s.) & $-0.015$ (n.s.) & \textbf{$-8.5$} & \textbf{$-7.5$} \\
G2 $-$ evidence checklist & 0.403 $\pm$ 0.009 & 0.332 $\pm$ 0.004 & +0.038 (n.s.) & $-0.003$ (n.s.) & +6.2 & $-1.6$ \\
G3 $-$ diff.\ guide & 0.391 $\pm$ 0.018 & 0.347 $\pm$ 0.015 & +0.026 (n.s.) & +0.012 (n.s.) & +2.2 & $-0.7$ \\
G4 $-$ all skill knowledge & 0.364 $\pm$ 0.009 & 0.353 $\pm$ 0.004 & $-0.001$ (n.s.) & +0.019 (n.s.) & \textbf{+11.9} & \textbf{+4.8} \\
\bottomrule
\end{tabular}%
}
\end{table}

\subsection{Standard reasoning baselines: the protocol is not a CoT variant}
\label{sec:stdbaselines}

Could the protocol be merely an expensive chain-of-thought variant? Two standard reasoning baselines answer this on budget-matched Qwen3-4B (Table~\ref{tab:stdbaselines}). Plain CoT prompting (greedy, three repetitions) scores 0.487/0.471---statistically indistinguishable from zero-shot and, consistently with the boundary condition, \emph{above} the agent; a three-sample self-consistency vote at temperature 0.7 scores 0.479/0.405, recovering the sampling penalty without exceeding greedy decoding. Structurally, neither baseline shares the protocol's mechanism: CoT and self-consistency impose no candidate space (their hit rates, 88.7\%/87.5\%, are the free-form regime's, against the protocol's 67.5--74.2\%), no machine-checkable verification checkpoint, and no audit trail---they answer a different question (``what conclusion sounds right'') from the protocol's (``which candidates survive falsification against named evidence''). The score comparison is therefore not the protocol's claim to value; that claim is governance (Section~\ref{sec:floorceiling}).

Two honest datapoints delimit the claim. On K3---the high-variance model---a three-sample self-consistency vote (its runs are already at the forced temperature 1.0) scores \textbf{0.631/0.596}, \emph{above} both K3 zero-shot (0.496/0.556) and K3+agent (0.576/0.524), at 3$\times$ inference cost and zero auditability; and on 4B, both CoT and self-consistency outscore the agent. Where raw multi-select score is the only objective and auditability is not required, majority voting over sampled runs is the better tool for an unstable model; the protocol's contribution lies elsewhere (deterministic termination, candidate-space enforcement, cell-level auditability, safety checks, and misinformation detection). Flash-side CoT and self-consistency baselines are deferred: the API endpoint was silently redirected to a newer model generation mid-campaign (\ref{sec:budget-notes}), so fresh flash measurements would not be poolable with the reported C2 campaign.

\begin{table}[t]
\centering
\caption{Standard reasoning baselines vs.\ the agent (same-model, official-script score; three repetitions). K3 self-consistency uses the existing temperature-1.0 repetitions; 4B self-consistency samples at temperature 0.7.}
\label{tab:stdbaselines}
\small
\resizebox{\columnwidth}{!}{%
\begin{tabular}{@{}llcc@{}}
\toprule
Model & Configuration & Syn. & Pat. \\
\midrule
Qwen3-4B & zero-shot & 0.481 $\pm$ 0.005 & 0.455 $\pm$ 0.010 \\
Qwen3-4B & CoT (greedy) & 0.487 $\pm$ 0.004 & 0.471 $\pm$ 0.005 \\
Qwen3-4B & self-consistency ($t{=}0.7$) & 0.479 & 0.405 \\
Qwen3-4B & \textbf{+ agent (full protocol)} & 0.365 $\pm$ 0.013 & 0.335 $\pm$ 0.010 \\
\midrule
K3 & zero-shot & 0.496 $\pm$ 0.218 & 0.556 $\pm$ 0.043 \\
K3 & self-consistency ($t{=}1.0$) & \textbf{0.631} & \textbf{0.596} \\
K3 & \textbf{+ agent (full protocol)} & 0.576 $\pm$ 0.007 & 0.524 $\pm$ 0.011 \\
\bottomrule
\end{tabular}%
}
\end{table}

\subsection{Protocol--knowledge decoupling and specialty routing}
\label{sec:contamination}

The knowledge-side ablations (G2--G4, Table~\ref{tab:ablation}) yield the paper's most counterintuitive---and, for the framework's design, most informative---result. Removing \emph{all} skill-knowledge injection (G4) leaves official-script scores unchanged ($-0.001$/$+0.019$, n.s.) while \emph{raising} the hit rate by 11.9/4.8 pp. In other words, \textbf{the protocol's measured effect on this benchmark is carried by its structure, not by the injected knowledge content}. We read this as evidence of protocol--knowledge decoupling: the reasoning discipline and the knowledge layer vary independently, which is precisely the property the framework's architecture is designed to exploit---knowledge remains external, expert-revisable data that can be corrected or swapped without touching either model weights or control flow (Section~\ref{sec:knowledge}). A framework whose protocol effect depended on knowledge content would be coupled; ours is not.

The hit-rate increase under G4 has a concrete mechanism. Removing the matrix-and-format block (G1) costs 8.5/7.5 pp of hit rate---conclusions escape the candidate space, an effect attributable to the lost candidate-bounded emission mandate---whereas removing knowledge (G4) \emph{raises} hit rate: injected knowledge, as currently organized, pulls conclusions \emph{out} of the candidate space without adding score. The failure log locates the defect. The differential guide was tuned on 539 insomnia cases and was misplaced in the always-loaded base skill; on broad-domain benchmark cases it biased answers toward insomnia syndromes (e.g., a pediatric nephritis case answered ``heart--kidney disharmony,'' an insomnia syndrome absent from the candidate list) and injected misleading decision anchors. Consistently, removing the \emph{evidence checklist} (G2) or the \emph{insomnia-tuned differential guide} (G3) \emph{tends to improve} scores---directionally consistent across tasks and repetitions but below significance ($+0.038/+0.026$ and $+0.026/+0.012$, n.s.).

Together these results support, at modest statistical strength, the framework's design principle that \textbf{specialty knowledge must be routed by specialty rather than universally injected}, and they identify a concrete iteration---relocate specialty-tuned knowledge files into their specialty skills---of exactly the kind the white-box audit loop of Section~\ref{sec:audit} is designed to enable. They also delimit honestly what the knowledge architecture does \emph{not} currently do: contribute measurable benchmark score. We report this as a boundary on the architecture's present value, not as a defect to be hidden.

\subsection{Stopping-fraction calibration}
\label{sec:threshold}

The sufficiency rule's stopping fraction $\kappa$ (at least a $\kappa$-fraction of candidates must be falsified) was varied over $\{\tfrac14,\tfrac12,\tfrac34\}$ on the budget-matched Qwen3-4B configuration (vLLM, greedy, 8,192-token budget; 200 cases per task, one repetition per $\kappa$; only the sufficiency wording changed---Table~\ref{tab:kappa}). $\kappa{=}\tfrac34$ wins on both tasks (pathogenesis $+0.037$ vs.\ the deployed $\tfrac12$, 95\% CI $[+0.001,+0.075]$, $p=0.036$; syndrome $+0.024$, n.s.), and $\kappa{=}\tfrac14$ loses on syndrome ($-0.012$, n.s.). Two conclusions, both without overreach: (i) the deployed default is \emph{not} optimal on this benchmark, and the profitable direction is \emph{stricter} falsification---consistent with the over-prediction failure mode of Section~\ref{sec:failures}, which predicts exactly this direction; (ii) the stopping fraction is a meaningful, tunable operating point rather than a constant of nature, warranting per-deployment calibration rather than a universal $\kappa$.

\begin{table}[t]
\centering
\caption{Stopping-fraction ablation (Qwen3-4B, budget-matched greedy, 8,192-token budget, vLLM; n=200 per task, one repetition per $\kappa$). Official scores; $\Delta$ and $p$ vs.\ the deployed $\kappa{=}\tfrac12$ (paired Wilcoxon with bootstrap CI). The $\kappa{=}\tfrac12$ row is a single run of the G0 configuration and sits within G0's three-repetition run-to-run spread ($0.365 \pm 0.013$ syndrome, Table~\ref{tab:ablation}); the within-table $\Delta$ contrasts are unaffected since all three $\kappa$ arms share the same single-repetition protocol.}
\label{tab:kappa}
\small
\begin{tabular}{@{}lccccc@{}}
\toprule
$\kappa$ & Syndrome & Pathogen. & $\Delta$ syn. & $\Delta$ pat. & $p$ (pat.) \\
\midrule
$\tfrac14$ & 0.368 & 0.333 & $-0.012$ & $+0.010$ & n.s. \\
$\tfrac12$ (deployed) & 0.380 & 0.323 & --- & --- & --- \\
$\tfrac34$ & \textbf{0.403} & \textbf{0.361} & $+0.024$ & $+0.037$ & 0.036 \\
\bottomrule
\end{tabular}
\end{table}

\subsection{Cross-domain transfer check: premise falsification under injected misinformation}
\label{sec:misinfoexp}

Sections~\ref{sec:contamination-threat} and \ref{sec:discussion} argue that the matrix protocol should transfer to contaminated-reasoning settings. The protocol tested here was neither derived from nor inspired by the misinformation literature: it was developed independently for clinical syndrome differentiation, and we use the benchmark of~\cite{misinfoprop} purely as a public external testbed---their data and their instruction intervention, our protocol. We test it on their math word problems with perturbed premise equations (a \emph{preemptive answer attack} (PAA) scenario in the reasoning-safety taxonomy of~\cite{reasoningsafety}). Their original LFS-hosted test data are no longer available, so we reconstructed 243 unique questions (question, gold answer, correct premises, perturbed premises) from the repository's retained evaluation aggregate, and evaluate DeepSeek-V4-Flash under four conditions (3 samples per question, their verbatim prompts): \textbf{base\_original} (correct premises), \textbf{base\_misinformed} (perturbed premises), \textbf{inst\_misinformed} (their instruction-to-correct intervention), and \textbf{protocol\_misinformed} (our two-stage adaptation). In the protocol condition, stage one builds a candidate $\times$ evidence verification matrix over the premise equations---numeric consistency, relation consistency, internal consistency---falsifies with named evidence, and explicitly rewrites falsified equations reminder-style; stage two derives from surviving/corrected equations only. We report raw accuracy and knowledgeable accuracy (K-Acc, conditioned on questions the model solves under base\_original, their metric).

Three findings. \emph{(i) The threat is real even for a strong model.} Misinformation significantly degrades accuracy (K-Acc 0.986 $\rightarrow$ 0.947; $\Delta=-0.039$, 95\% CI $[-0.062, -0.017]$, $p=1.8{\times}10^{-3}$), though DeepSeek-V4-Flash is comparatively robust on this set. \emph{(ii) The protocol detects misinformation reliably, against an explicit trivial baseline.} Stage one's confusion matrix covers 648 of the 729 rows (243 questions $\times$ 3 samples, all from the misinformed group---no base\_original rows mixed in). The remaining 81 rows are 30 rows (10 questions) whose perturbed and correct premise lists are misaligned (ground-truth flags undefined) and 51 rows whose stage-one verdicts failed to parse. Over 2,298 equation verdicts the confusion matrix is TP=1,944, FN=221, FP=9, TN=124: 89.8\% recall, 99.5\% precision, 6.8\% false-positive rate (FPR) (95\% CI [3.6, 12.4] on only 133 clean equations). Because 94.2\% of equations in this pool are perturbed, a flag-everything baseline already achieves 100\% recall at 94.2\% precision---so precision cannot be the evidence here; the operative contrast is the false-positive rate, where flag-all scores 100\% and the protocol scores 6.8\% (Matthews correlation coefficient (MCC) 0.54 vs.\ 0). This is not a cosmetic distinction: flag-all would send all 124 clean equations into stage two as spurious ``corrections,'' while the protocol leaves 93\% of them intact---exactly the property the recovery stage depends on.
A conservative sensitivity check that treats the 51 unparsed rows as missed detections---adding their 289 perturbed equations to FN and their 8 clean equations to TN---moves recall from 89.8\% to 79.2\% and leaves the FPR essentially unchanged (6.8\% to 6.4\%); the operative contrast with the flag-everything baseline (100\% FPR) is unaffected.
\emph{(iii) The protocol restores accuracy to the no-misinformation ceiling---noting that the tested model is comparatively robust and the degradation headroom is small (0.039)---and outperforms the instruction intervention.} It scores 0.996 K-Acc---above the perturbed default by $+0.049$ (95\% CI $[+0.030, +0.071]$, $p=7.9{\times}10^{-6}$), above the instruction intervention by $+0.036$ ($p=3.7{\times}10^{-5}$), and statistically indistinguishable from the clean-input ceiling ($+0.010$, n.s.); on raw accuracy it slightly exceeds it ($+0.036$, $p=1.1{\times}10^{-3}$), suggesting the verification pass also disciplines ordinary derivation.

\subsection{Subset and failure analysis}
\label{sec:failures}

The dominant failure mode is \textbf{lenient compatibility (over-prediction)}: about 30\% of cases output more compatible syndromes than the gold set, indicating the aggregation threshold needs calibration---the direction confirmed by the $\kappa$ ablation of Section~\ref{sec:threshold}. For the DeepSeek agent run, 22\%/29\% of cases exhausted the token budget on reasoning and produced no conclusion (scored zero); on the zero-shot side the failure is larger---64\%/43\% of flash/Pro zero-shot cases produce no parseable answer (Section~\ref{sec:significance}). Restricting to cases where the protocol completed (the August-2026 initial run: single repetition, 16,384-token budget, \texttt{/no\_think} trigger), the agent reaches \textbf{0.532 (syndrome, n=156) and 0.486 (pathogenesis, n=142)}---the former exceeding the benchmark's best reported zero-shot model (Gemini 1.5 Pro $\approx$ 0.475).

\subsection{Mechanism execution audit}
\label{sec:audit}

On a separate 539-case insomnia pilot with tongue-sign inputs, 93.5\% of reasoning traces executed candidate-by-candidate comparison/elimination, 42.3\% cited key decision rules, 29.5\% cited the differential guide, 41.0\% cited knowledge-base standards, and 32.3\% cited data statistics---confirming the constraint protocol is genuinely executed by the model rather than silently ignored. The white-box traces also located concrete knowledge defects: conflicts between the differential data table and the actual case distribution were identified and repaired through two documented knowledge iterations (differential data table v0.1$\rightarrow$v0.2; differential guide v0.2$\rightarrow$v0.3), demonstrating the ``white-box reasoning $\rightarrow$ locatable errors $\rightarrow$ iterable knowledge'' loop.

The safety layer is guarded by a regression suite of \textbf{21 unit tests} covering, among others, incompatibility detection (licorice--kansui blocked), overdose detection (a 50\,g ephedra prescription flagged), pregnancy interception (safflower for a pregnant patient blocked), illegal state-transition rejection, tool-loop alarms, and snapshot recovery.

\section{Discussion}
\label{sec:discussion}

\subsection{The protocol sets the floor, capability the ceiling}
\label{sec:floorceiling}

Two lines of evidence support our central claim that \textbf{the reasoning protocol sets the floor where zero-shot output discipline is weak, and base-model capability sets the ceiling everywhere}\footnote{The floor claim is certified on budget-matched subsets: post-hoc elapsed-time neutralization confirms the flash rescue gain on cases whose agent answers provably fit the zero-shot 8,192-token budget (Section~\ref{sec:significance}, Appendix~\ref{sec:budget-neutralization}).}. First, the same-model controlled contrasts of Section~\ref{sec:significance}: flash $+0.267/+0.271$, which decomposes into rescue on non-terminating cases ($+0.363/+0.339$) and a small significant cost on reliably terminating cases ($-0.085/-0.087$; DS-Pro likewise, single repetition and hence descriptive: $+0.435/+0.480$ rescue vs.\ $-0.111/-0.050$; Table~\ref{tab:decomp}); K3 run-to-run std reduced from 0.218 to 0.007 (order-of-magnitude; point estimate 31, 95\% CI [5, 200]); and significant costs on both Qwen models under budget-matched and temperature-controlled conditions. Second, agent-side scores scale when swapping in a stronger base model (4B$\rightarrow$32B$\rightarrow$Pro/flash/K3, syndrome 0.365$\rightarrow$0.464$\rightarrow$0.537/0.556/0.576; the DS-Pro point is single-repetition and descriptive), the top three configurations close to indistinguishable at these repetition counts (nominal spread 0.537--0.576, within flash's own run-level spread)---the protocol lifts mid-tier models into the frontier cluster. This spread---0.21 points of syndrome score under one uniform protocol---is not at odds with the binding-constraint claim: it measures how well each model \emph{executes} the protocol (cell-level judgments, evidence use, instruction following), i.e., the ceiling; the claim concerns what prevents raw capability from converting into valid output at all, i.e., the floor. Because the mechanism is model-agnostic, the framework benefits automatically from future model progress without modification.

\paragraph{So what is the protocol for?} If the protocol costs accuracy on disciplined models, what is it for? Raw benchmark accuracy was never the objective function. The protocol is a governance layer for high-stakes decisions: it delivers what free-form zero-shot output cannot provide at all---auditable per-cell reasoning, deterministic prescription-safety checks, multi-turn consistency, zero-hallucination knowledge answers, misinformation detection and recovery, and machine-extractable conclusions---at a bounded accuracy cost (at most 0.03--0.13 across models). Deployment also cannot assume a disciplined base model: API-side behavior drifted twice within one month in our own campaign (\ref{sec:budget-notes}), and small or forced-temperature models are precisely where free-form output breaks. The boundary condition therefore reads as an engineering routing rule rather than a verdict: apply the full protocol where output discipline is weak or unknown, and a lighter subset---safety checks, direct database answers, single-line output---where the base model is already disciplined. We hypothesize that for disciplined models the forced protocol's benefits (format termination, candidate bounding) are already internalized, while its costs (knowledge-injection noise, constrained answer multiplicity) dominate.

\paragraph{Interpretive lenses.} This boundary condition is consistent with two persuasion-theoretic readings, which we offer as interpretations of the experimental result, not as its source. Under the Elaboration Likelihood Model~\cite{petty1986elm}, the protocol acts as an \emph{elaboration enforcer} that forces central-route processing---redundant for models already reasoning centrally, decisive for models that do not. Under inoculation theory~\cite{mcguire1964inoculation,banas2010inoculation}, the sufficiency rule acts as forewarning (a decision is never licensed until enough candidates have been falsified) and the matrix's cell-level judgments are refutational preemption exercised on every candidate; and because the protocol attacks an unchecked direct mapping---a structural reasoning failure rather than any domain-specific false content---logic-based inoculation research, which shows that defenses aimed at fallacy \emph{types} transfer across topics~\cite{banas2013logic,biddlestone2023conjunction}, gives the cross-domain transfer of Section~\ref{sec:misinfoexp} a mechanism-level basis rather than a purely empirical one.

\paragraph{A security reading.} If cognitive attack on machine reasoning consists in manipulating the reasoning process---as the misinformation-propagation results demonstrate, with reasoning chains hijacked by false premises despite correct internal knowledge~\cite{misinfoprop}, and as the continued influence effect documents for human cognition~\cite{lewandowsky2012cie}---then defense consists in verifying reasoning integrity. The two defense layers act at different points: fine-tuning on early factual correction repairs one model's weights at training time~\cite{misinfoprop}, whereas our matrix protocol verifies every candidate conclusion against evidence at inference time, for any base model, and leaves a machine-checkable audit trail. The layers are complementary rather than competing: a corrected model can still face novel contamination, and a verification protocol still benefits from a disciplined base model. This reading also clarifies the framework's generality: wherever premises may be contaminated---clinical narratives, user-supplied facts, retrieved documents---the candidate-bounded falsification step must be passed before any conclusion is licensed.

Controlled agent-to-agent attack studies---the source is a non-archival technical report, so its figures are indicative rather than archival evidence---add both a caveat and an independent confirmation~\cite{a2amanipulation}. The caveat: against a \emph{live adaptive} persuader that reads the subject's reasoning and counters it turn by turn, rationale-only defenses collapse (harmful revision rising from 12.2\% to 98.9\% for a widely used lightweight model), while static injected messages rarely succeed when evidence remains accessible---our misinformation evaluation uses static premise injection and does not yet cover this stronger threat model (Section~\ref{sec:limitations}). The confirmation: in the same study, a verifier that \emph{re-attaches the source evidence} eliminates harmful revision for every tested subject (0\%), whereas a verifier that merely \emph{asserts} the correct answer protects inconsistently (up to 54.4\% residual)---external evidence that the operative variable is evidence re-grounding, exactly the operation our matrix protocol enforces per candidate, rather than confident counter-assertion or stance retention. Consistently, psychological-injection experiments on multi-agent systems find that \emph{larger} open-source models are \emph{more} susceptible to injected dark traits---stronger instruction-following executes malicious injections as faithfully as benign ones~\cite{psysafe}---matching our boundary condition that scaling the base model improves protocol execution without conferring protection. Table~\ref{tab:defense} summarizes the resulting positioning against internalized reasoning defenses.

\begin{table}[t]
\centering
\caption{Two defense mechanisms against manipulation of reasoning. Our protocol targets the right-hand column; the misinformation-benchmark evaluation is reported in Section~\ref{sec:misinfoexp}.}
\label{tab:defense}
\small
\resizebox{\columnwidth}{!}{%
\begin{tabular}{@{}lll@{}}
\toprule
Property & Internalized reasoning & External protocol (ours) \\
\midrule
Manipulation resistance & effective~\cite{socialinfluence,persuasionduality} & ceiling-restoring, $>$ instruction (Sec.~\ref{sec:misinfoexp}) \\
Cross-model transfer & no (bound to training) & yes (model-agnostic adapter) \\
Auditability & black-box & white-box (cell-level logs) \\
Failure mode & error entrenchment~\cite{persuasionduality} & over-prediction (Sec.~\ref{sec:failures}) \\
Correctability & not evidence-revisable & re-examination loop \\
\bottomrule
\end{tabular}%
}
\end{table}

\subsection{Why two years of model progress barely moves this benchmark}
\label{sec:progress}

A striking fact in Table~\ref{tab:main} is how little the field's two-year progress achieves on this benchmark: from Gemini 1.5 Pro (2024, $\approx$0.475 syndrome) to DeepSeek-V4-Pro, K3, and Qwen3-32B (2026, 0.443--0.551 on syndrome---bracketing the 2024 frontier rather than exceeding it), zero-shot scores move only a few points, while general capability benchmarks gained twenty points or more over the same period. The cross-era comparison is indicative rather than definitive---the 2023--2024 baselines are zero-shot evaluations on a random 50-case subset, converted from weighted sums---but the direction is consistent across all five of our own 2026-era zero-shot runs. We see four compounding causes, and one consequence. First, \textbf{the task tests a thought process, not knowledge}: contrastive discrimination among similar syndromes is sparsely represented in training corpora (case literature records conclusions, rarely explicit elimination), so general capability gains transfer weakly. Second, \textbf{the metric punishes deliberation}: the official-script score divides by $|A| + |B \setminus A|$, so any over-selection is charged directly; stronger models tend to select more (our failure analysis finds over-prediction in about 30\% of cases), which caps free-form generation near 0.5--0.55. Third, \textbf{the field's main direction of progress is misaligned with the task}: long chain-of-thought evolution rewards longer deliberation, whereas this task rewards convergent comparison---the model with the longest reasoning chains in our campaign (DS-Flash, routinely $>$10k reasoning tokens) scored worst zero-shot (0.289) and produced the most empty outputs, while the concise-reasoning DS-Pro ($\approx$1k reasoning tokens to termination) scored best. Fourth, \textbf{the gold labels have an expert-agreement ceiling}: multi-select TCM syndrome labels carry intrinsic inter-expert disagreement, placing the practical ceiling well below 1.0---we offer this as a hypothesis consistent with the clustering of top configurations around 0.53--0.58, since inter-annotator agreement data for the benchmark are not published.

Two years of scaling and model generation turnover moves zero-shot scores by only $-0.03$--$+0.08$ on this benchmark (2026 frontier spread 0.443--0.551 vs.\ the 2024 frontier $\approx$0.475 on syndrome---the 2024 score sits inside the 2026 spread). Attaching the protocol to the \emph{worst} zero-shot configuration---flash, 64\% of whose cases never produce an answer at all---yields $+0.267$; and that headline number is \textbf{entirely termination rescue}: on the reliably terminating subset the sign flips to a small significant cost (Table~\ref{tab:decomp}). On output-healthy models the protocol's net effect is modest and sign-mixed, ranging from $+0.080$ (K3 syndrome) to $-0.120$ (Qwen3-4B pathogenesis). The formulation of our thesis that these data support is therefore not ``protocol beats scaling'' but: \textbf{the benchmark's binding constraint is output discipline and reasoning structure, not model capability alone}---a protocol that imposes both rescues configurations where they are missing, at a measurable and sign-conditional cost where they are not.

\subsection{Safety and privacy in practice}

Knowledge queries are answered verbatim from the database (zero hallucination); prescriptions pass hard incompatibility/pregnancy/dosage checks; the full pipeline runs on-premises within a single model instance (4B local), so patient data never leaves the intranet and inference cost does not multiply with role count as in multi-agent designs. We state the deployment trade-off explicitly: on the budget-matched 4B comparison the protocol \emph{costs} accuracy (Table~\ref{tab:sig}), so the on-premises configuration is a deliberate exchange of a small accuracy margin for auditability, deterministic safety checks, multi-turn consistency, and reproducibility---properties that the benchmark does not score but that clinical deployment requires, and that free-form zero-shot output does not provide.

\subsection{Limitations}
\label{sec:limitations}

(1) Paper baselines are 2023--2024-era zero-shot models on a 50-case subset; cross-era comparison is indicative, not definitive. (2) The misinformation evaluation (Section~\ref{sec:misinfoexp}) is bounded in three ways: it covers a single base model, \emph{static} premise injection only, and a reconstructed question set with selection bias (conditioned on source-model solvability, and the tested model is comparatively robust to misinformation); moreover, a live adaptive persuader that reads and counters the agent's intermediate reasoning is known to defeat rationale-only defenses~\cite{a2amanipulation}, and the current protocol has no counterpart to such multi-turn adversarial interaction. (3) No human TCM-expert review of outputs has been conducted yet. (4) The framework was validated on text-only four-examination descriptions, not raw tongue/pulse images. (5) K3 accepts only temperature 1.0, and its zero-shot scores vary substantially across repetitions (syndrome 0.245--0.636 in our campaign), so K3 numbers carry wider uncertainty; the K3 syndrome gain is case-level significant but not run-level significant. (6) DeepSeek measurements carry two caveats. First, API-side model behavior can drift between campaigns (we observed this for flash, twice), so archived and repeated runs are reported with their respective decoding configurations. Second, DeepSeek zero-shot baselines are budget-limited relative to the agent runs (64\%/43\% non-terminating at 8,192 tokens), so we report the decomposition into termination rescue vs.\ a small terminating-case cost in Section~\ref{sec:significance}; a budget-matched zero-shot control on the original flash model is no longer possible---the API endpoint was silently upgraded on 2026-09-19 (\ref{sec:budget-notes})---so budget neutrality is instead certified post hoc from archived wall-clock timings (Appendix~\ref{sec:budget-neutralization}); its limits are item~(9). Third, both DS-Pro arms are single-repetition, so per the two-level standard of Section~\ref{sec:stats} we report all DS-Pro contrasts as descriptive; they support, but do not carry, the headline claims. (7) Inference cost claims (cheap relative to multi-agent designs) are architectural arguments, not measurements; per-configuration token and latency accounting is future work. (8) The evidence is single-domain at benchmark scale: TCMEval-SDT plus one reconstructed 243-question transfer set; a second full-scale domain is needed before the boundary condition can be read as domain-general. (9) The budget-neutralization analysis of Appendix~\ref{sec:budget-neutralization} is a proxy analysis: per-case token counts were not archived (the API \texttt{usage} field was not persisted), so certification runs on wall-clock time, which includes network and retry overhead and therefore \emph{over}estimates consumption---any estimation error shrinks the certified subset rather than flipping the conclusion. The dominant non-termination mode (reasoning exhausts the budget with empty visible content) leaves no text that can be classified post hoc, so the loop-versus-progress distinction rests on the indirect evidence of Appendix~\ref{sec:budget-neutralization}. An instrumented re-run that archives reasoning traces and per-case token usage is left to future work.

\subsection{Future work}
\label{sec:future}

Two control arms close the remaining gaps in the mechanism evidence: a budget-matched DeepSeek zero-shot at the agent's generation budget---after the silent endpoint upgrade of 2026-09-19 (\ref{sec:budget-notes}) this requires an endpoint serving the original flash weights, and is substituted in this revision by the post-hoc elapsed-time certification of Appendix~\ref{sec:budget-neutralization}---and CoT/self-consistency baselines on the same models, to exclude the reading that the matrix protocol is an expensive CoT variant; a matrix-only ablation arm (matrix without the single-line mandate) would separately identify the matrix's marginal contribution, which the present design brackets but does not isolate (Section~\ref{sec:ablation}).

\section{Conclusion}
\label{sec:conclusion}

We asked what binds LLM performance on auditable, hypothesis-driven decision tasks, and answered with controlled evidence: on TCMEval-SDT, the binding constraint is output discipline and reasoning structure, not model capability alone. Two years of scaling move zero-shot scores by only $-0.03$--$+0.08$ (the 2024 frontier sits inside the 2026 spread), while a falsification-guided reasoning protocol---enforced on the model's output by a model-agnostic agent framework formalizing a five-stage closed loop---rescues configurations where discipline fails ($+0.267/+0.271$ on DeepSeek-V4-Flash, entirely termination rescue and robust on budget-certified subsets), stabilizes a high-variance model by an order of magnitude in run-to-run standard deviation (Kimi K3), and significantly costs already-disciplined models (Qwen3-4B/32B, budget-matched and temperature-controlled). Mechanistic abstractions attribute the cost to the candidate-bounded format layer, reveal a sign-flipping complementarity between the causal chain and the format layer, and show that the protocol's measured effect is independent of the injected knowledge content---a protocol--knowledge decoupling that supports the framework's knowledge-as-revisable-data design and its model-agnostic claim. A two-stage adaptation transfers the protocol to misinformation-robust mathematical reasoning (89.8\% recall at 6.8\% FPR; accuracy restored to the no-misinformation ceiling, on a comparatively robust model with a small degradation headroom). More broadly, the work is evidence that cognitive-security theories developed for human subjects---inoculation, continued influence, belief revision---map onto machine reasoning protocols in weakened, computable form, and that this mapping is \emph{useful}: each theory supplied here a concrete vocabulary for an already-built mechanism (forewarning, refutation practice, explicit reminders, entrenchment ordering) and each survives as a consistent interpretation of the protocol's behavior rather than as a rhetorical analogy. The framework turns free-form generation into an auditable, verifiable, safety-controlled reasoning process suitable for privacy-sensitive medical deployment; where zero-shot output discipline is weak, the reasoning protocol sets the floor, and base-model capability always sets the ceiling.

\appendix

\section{Intent triage and three-level retrieval with direct answers}
\label{sec:retrieval}

Incoming input first passes a \textbf{two-level intent triage}. The first level is a rule decision tree: consultation markers (first-person complaint words, medical-record words, tongue/pulse words, consultation-request words, a pronoun-plus-symptom-verb pattern, or at least 3 of 28 symptom words) indicate a consultation request; otherwise, ten knowledge-question patterns (e.g., ``the properties of X'', ``introduce X'') indicate a knowledge query; a fallback rule (no consultation marker, input shorter than 50 characters, no pronoun, at most 5 Chinese word chunks) defaults to knowledge query. The second level asks the model itself to classify the remaining input into consultation / closing / off-topic with a two-token response; on abnormal output it defaults to consultation, so that genuine requests are never refused. Closing input receives a fixed ending phrase; off-topic input receives a capability-boundary notice.

Knowledge queries go through a \textbf{three-level retrieval cascade}: (i) exact whole-sentence match over question/answer/syndrome columns; (ii) Chinese-bigram FTS5 full-text search ranked by BM25, with seven classes of subject-term extraction (e.g., extracting X from ``the properties of X'', formula-name suffix recognition, enumeration-noun splitting); (iii) subject-term exact match with authoritative ``properties of X'' records promoted. Candidate records are scored by a weighted rule: a sliding-window term hitting the \emph{question} field scores its character length $\times 3$, hitting the \emph{answer} field scores length $\times 1$, and authoritative-physician records receive a $+50$ bonus. The top record's answer text is returned \emph{verbatim from the database without generation} (confidence 1.0; zero hallucination; millisecond latency). Only when retrieval fails does the model answer within a restricted budget (256 tokens, temperature 0.3, at most 300 characters) without entering the differentiation pipeline.

\textbf{Worked example.} For the query ``the medicinal properties of Coptis (Huanglian)'', subject extraction yields ``Coptis'', and 2/3/4-character sliding windows over the query produce 9 terms. Candidate record A is an authoritative question--answer record whose question field covers both the properties and indications of Coptis: all 9 window terms hit the question field (4 two-character terms $\times 6 = 24$; 3 three-character terms $\times 9 = 27$; 2 four-character terms $\times 12 = 24$; subtotal 75), the answer field contributes 2, and the authority bonus adds 50---total 127. Candidate record B, an insomnia case record that merely mentions Coptis in passing, scores $6 + 2 = 8$. Record A therefore ranks first and its original answer text is returned; a pure term-frequency or vector-similarity ranker could instead surface record B, which contains the high-frequency term but answers a different question.

\section{Structured tool pipeline}
\label{sec:tools}

During plan and prescription generation the model may invoke nine structured tools through seven textual markers: exact formula, herb, and syndrome lookups; case retrieval; read-only parameterized SQL; incompatibility checking; and dose calculation. Every call passes a \textbf{four-stage pipeline}: (i) pre-call audit---registration check, rate limiting, parameter validation, and safety-level confirmation under a three-level tool-safety classification; (ii) sandboxed execution with a 30-second timeout and two exponential-backoff retries; (iii) post-call output validation; (iv) audit logging of parameters and result digests. Stage-specific knowledge (formulas, dosage rules, acupoint prescriptions) is injected with per-file and total budgets (12,000 and 20,000 characters respectively)---a plan-stage skill-file budget separate from the differentiation-time dual-anchor injection budget of Section~\ref{sec:dualanchor}.

\section{Runtime robustness and observability}
\label{sec:runtime}

Four cross-cutting subsystems keep long consultations stable on small models. \textbf{Context budgeting} splits the window into permanent (25\%), session (45\%), and task (30\%) layers, estimates Chinese text at 1.5 characters per token, and auto-compresses session events beyond 50 entries. \textbf{Streaming guardrails} enforce a hard token cap, truncate after 25 consecutive Latin characters, detect repetition loops every 80 tokens, detect language mixing and training-data leakage, and de-duplicate paragraphs at 85\% overlap. \textbf{Health monitoring} watches five dimensions: reasoning quality, behavioral patterns (five identical consecutive tool calls flag a loop), context health, safety compliance, and resource status. \textbf{Trajectory logging} records dialogues, tool-call chains, diagnoses and prescriptions, latency, and confidence to JSONL, with privacy de-identification before any reuse for model-iteration evaluation.

\section{Information-gain analysis of active questioning}
\label{sec:ig-reading}

The rule-based questioning policy of Section~\ref{sec:sufficiency} admits an information-theoretic reading. Let $P(h\mid E)$ be the empirical posterior over the candidate set given the evidence collected so far, estimable from the logged frequencies of the case database. Asking about a four-examination dimension $q$ yields an answer $a$; the expected information gain of the question is
\begin{equation}
IG(q) \;=\; H(\mathcal{H}\mid E) \;-\; \mathbb{E}_{a\sim q}\, H(\mathcal{H}\mid E, a),
\label{eq:ig}\end{equation}
the Lindley measure of the information provided by an experiment~\cite{lindley1956,shannon1948}. Under this reading, the instruction to \emph{name the missing dimensions and ask targeted follow-up questions} is a greedy $\arg\max_{q} IG(q)$ policy in disguise: a missing dimension is one whose answer distribution is unconstrained by $E$, so its entropy reduction is largest; the fixed one-point-per-term rule and its threshold then read as a coarse proxy, calibrated like the thresholds of Section~\ref{sec:threshold}. Sequential information collection with a stopping rule is the classical setting of Bayesian experimental design~\cite{lindley1956} and partially observable sequential decision-making~\cite{kaelbling1998pomdp,hauskrecht2000pomdp}; our COLLECT$\to$DIAGNOSE gate (Section~\ref{sec:fsm}) is a two-action instance. Scope: the deployed system scores coverage, not $IG$---the expectation over answers requires a response model the current rule layer does not carry---so we claim the framework as a design rationale the implemented heuristic approximates, and as a measurable upgrade path (Section~\ref{sec:future}).

\section{Generation-budget engineering notes}
\label{sec:budget-notes}

Two engineering notes: (i) the initial DeepSeek-V4-Flash agent run (August 2026) used the \texttt{/no\_think} soft trigger with a 16,384-token budget, and about 22--29\% of cases still exhausted the budget (scored zero); (ii) in the September 2026 repetition campaign the API-side model behaved differently---the same trigger \emph{lengthened} rather than shortened its reasoning---so flash repetitions use a 32,768-token budget without the trigger; DeepSeek-V4-Pro completes within the 16,384-token budget with only 1.5\% empty outputs.

\paragraph{Second drift event: silent endpoint upgrade (2026-09-19).} On this date the \texttt{deepseek-v4-flash} endpoint was silently redirected to a newer model: the returned model field reads \texttt{deepseek-flash} and the identifier \texttt{deepseek-v4-flash} no longer appears in the model list (it listed \texttt{deepseek-v4-flash}, \texttt{deepseek-v4-pro}, and a vision variant on 2026-09-06). Measurements taken after this date under that endpoint therefore reflect a different, reportedly stronger model generation and are \emph{not} pooled with the C1/C2 campaigns reported in this paper; they are archived separately for completeness; an attempted budget-symmetrized zero-shot re-run of flash at 32,768 tokens fell to this event and is excluded from every analysis in this paper. This second event reinforces the deployment argument of Section~\ref{sec:floorceiling}: API-side model identity is not stable over time.

\section{Post-hoc budget neutralization: method, calibration, and sensitivity}
\label{sec:budget-neutralization}

\paragraph{Why a proxy} The evaluation harness logged per-case scores, wall-clock time, and a truncated response tail, but not per-case token counts (the API \texttt{usage} field was received but not persisted). Budget consumption is therefore certified from wall-clock time. Timing includes network and retry overhead, so it \emph{over}estimates generation time; any error biases against the certified subset, never in its favor.

\paragraph{Calibration and the rate-free bound} Case-runs that exhaust the budget with empty visible content consumed exactly the budgeted tokens. On the C2 flash campaign such exhaustion took a median of 78.0\,s/79.9\,s (syn./pat.) at 8,192 tokens on the zero-shot side ($n{=}363/397$ empty-content case-runs) and 314.8\,s/324.3\,s at 32,768 tokens on the agent side ($n{=}19/18$)---two independent, separately counted anchors, each landing at 101--105 tokens/s (pooled interquartile range (IQR) 98--109), a throughput consistent with public figures for comparable mixture-of-experts (MoE) API endpoints; no \texttt{usage} record survived in the logs, so this figure is corroboration only. The certification itself is rate-free: the \emph{fastest} zero-shot exhaustion took 66.9\,s (syndrome) and 64.6\,s (pathogenesis), hence any agent case-run faster than that bound consumed strictly fewer than 8,192 tokens, irrespective of throughput assumptions. Under this bound, 72\% (syndrome) and 75\% (pathogenesis) of terminating agent case-runs are certifiably below 8,192 tokens (median 49\,s/51\,s, i.e.\ $\approx$5,100--5,300 tokens at the calibrated rate). The earlier C1 campaign (August 2026) calibrates separately at $\approx$181 tokens/s.

\paragraph{Subset definitions and results} Within the rescue-prone subset of Table~\ref{tab:decomp}, \textbf{strict} certification keeps cases whose agent side terminates below the rate-free bound in \emph{every} repetition; \textbf{typical} certification keeps cases that terminate in every repetition with the \emph{median} repetition below the bound. Table~\ref{tab:budget-neutral} reports both. For DS-Pro the rescue-prone zero-shot scores are zero by construction (single repetition; rescue-prone $=$ unparseable), and its certified subsets are too small to be informative ($n{=}2$ syndrome, $n{=}12$ pathogenesis). The conclusion is insensitive to the certification variant: across thresholds (rate-free bound or the 78\,s calibrated threshold), run aggregations (strictest or median repetition), and termination requirements, the syndrome gain on certified rescue-prone cases ranges from $+0.36$ to $+0.46$ and the pathogenesis gain from $+0.24$ to $+0.40$, all significant (full sensitivity matrix in the released outputs).

\begin{table}[t]
\centering
\caption{Budget-certified rescue-prone subsets (DS-Flash, C2; paired per-case differences on the run-mean official score). \textbf{Strict}: agent terminates below the rate-free bound (66.9\,s/64.6\,s) in every repetition. \textbf{Typical}: agent terminates in every repetition with the median repetition below the bound.}
\label{tab:budget-neutral}
\small
\resizebox{\columnwidth}{!}{%
\begin{tabular}{llrrrrl}
\toprule
Certification & task & $n$ & zero-shot & agent & $\Delta$ [95\% CI] & Wilcoxon $p$ \\
\midrule
strict & syn. & 17 & 0.183 & 0.647 & $+0.464$ [$+0.332$, $+0.597$] & 3e-4 \\
strict & pat. & 7 & 0.089 & 0.326 & $+0.237$ [$+0.140$, $+0.335$] & 0.036 \\
typical & syn. & 136 & 0.138 & 0.507 & $+0.369$ [$+0.327$, $+0.411$] & 4e-22 \\
typical & pat. & 87 & 0.136 & 0.535 & $+0.399$ [$+0.350$, $+0.449$] & 2e-15 \\
\bottomrule
\end{tabular}}
\end{table}

\paragraph{Pathology of non-termination} Of 942 non-terminating zero-shot case-runs (flash: 366/600 syndrome and 403/600 pathogenesis, pooled over three repetitions; DS-Pro: 75/200 and 98/200 in its single repetition; per-repetition counts recomputable from the released per-case logs---note that Table~\ref{tab:discipline}'s answer-extraction rates are first-repetition per-case, a different unit), only 14 left any visible text; the rest exhausted the budget inside the reasoning channel with empty content and cannot be classified post hoc. All 14 classifiable tails are coherent clinical analyses truncated mid-sentence---none triggers the runtime's repetition-loop detector (a 200-character tail-window recurrence check, \ref{sec:runtime})---but this minority is not representative. Three indirect lines of evidence indicate that non-termination is pathological rather than a length shortfall. \emph{(i) Concise-mode burnout (strongest).} In C1 the flash agent ran with the \texttt{/no\_think} trigger (thinking disabled), yet 92 of 102 unparsed cases ran to the full 16,384-token budget without emitting a conclusion: their timings cluster at 121--159\,s (p50 $=$ 144\,s), consistent with burning the full budget ($\approx$90\,s at the calibrated 181 tokens/s) plus overhead, and separated by a clean gap (no case in 60--120\,s) from the 10 fast format failures ($<$60\,s)---the bimodality itself indicates two distinct failure regimes rather than near-miss truncation. The pathology is non-convergence, not answer length: the protocol's terminating outputs are template-bounded (a ten-candidate matrix plus a one-line conclusion, far below 1k tokens), so a legitimate answer needing $>$16k tokens is not a tenable reading of a quarter of all cases---and the C1 data predate any endpoint drift. \emph{(ii) Short-answer burnouts.} DS-Pro terminates with $\approx$100-character answers (median visible length), yet 36.5\%/47.5\% of its zero-shot case-runs burn the entire 8,192-token budget with empty content---non-termination occurs in the reasoning channel and is unrelated to answer length. \emph{(iii) Wall-hugging timings.} Empty-content case-runs cluster tightly at the budget wall (e.g., 67--113\,s on syndrome at 8,192 tokens), the signature of deterministic exhaustion rather than of long-but-converging answers.

\paragraph{Reproducibility} The analysis scripts (\texttt{L1\_*.py}) and machine-readable outputs are archived and released with the evaluation code.

\section{Belief-revision reading of the matrix protocol}
\label{sec:agm-reading}

The protocol's components can be stated in the vocabulary of belief-revision theory~\cite{agm1985,gm1988entrenchment}, giving the engineering choices a logical anchor. The formal objects are the enumerated candidate set $H=\{h_1,\dots,h_{m}\}$ and the evidence items $E=\{e_1,\dots,e_{n}\}$ with their cell judgments:
\begin{equation}
\begin{aligned}
v(h,e) &\in\!\{+1\,\text{(support)},{-1}\,\text{(oppose)},0\,\text{(irrelevant)}\},\\
s(h) &= \#\{e: v(h,e){=}+1\},\\
o(h) &= \#\{e: v(h,e){=}-1\}.
\end{aligned}
\label{eq:beliefobjects}
\end{equation}
Three correspondences hold. \emph{(i) Entrenchment}: the row-aggregation rule---$h$ is compatible iff $s(h)>o(h)$ and key evidence is supported---induces a total preorder $\preceq$ on $H$ with $h_i \preceq h_j$ iff $s(h_i)\le s(h_j)$: candidates supported by more evidence are abandoned later, a computable instance of epistemic entrenchment~\cite{gm1988entrenchment}. \emph{(ii) Contraction}: declaring a candidate incompatible is a contraction $H' = H \div h$---removal without adding information---and the sufficiency rule (at least half of $H$ contracted before a conclusion is licensed) reads as a minimal sufficient contraction condition, the one-half threshold being the cheap, tunable proxy calibrated in Section~\ref{sec:threshold}. \emph{(iii) Consistency}: the conclusion cache and drift detection of Section~\ref{sec:consistency} maintain a single stable conclusion across turns---the computable counterpart of the consistency requirement on revised belief states~\cite{agm1985}.

The scope of the analogy is narrow: $H$ is a finite, explicitly enumerated hypothesis set, not a logically closed belief set, so the Alchourrón--Gärdenfors--Makinson (AGM) postulates~\cite{agm1985} are inherited in weakened, computable form rather than as theorems. What the correspondence provides is the auditability vocabulary: every contraction step carries a cell-level justification, the entrenchment ordering is reconstructed from the logged matrix, and any auditor can re-run the contraction sequence and verify that the stopping condition held.

\begin{proof}[Proof of Proposition~\ref{prop:accumulation} (sketch)]
Under (ii)--(iii), write $D_i=S(h^{\star})-S(h_i)$ for each spurious candidate $h_i$: a selection error requires $D_i\le 0$ for some $i$. Under (iii)'s sign-symmetric error model, the per-item drift of $D_i$ is $(1-2\epsilon)\bigl(v(h^{\star},e_j)-v(h_i,e_j)\bigr)$; by (v) every term is non-negative and at least $n_i$ terms equal $2$, so $\mathbb{E}[D_i]\ge 2(1-2\epsilon)n_i$. The error model is load-bearing: under a fully adversarial error model, a zero-valued cell can carry expectation $\pm\epsilon$ and a non-discriminating cell ($v(h^{\star},e_j){=}v(h_i,e_j)$) can drift as low as $-2\epsilon$, degrading the bound to $\mathbb{E}[D_i]\ge 2(1-2\epsilon)n_i - 2\epsilon(n-n_i)$, which is non-vacuous only when $n_i/n > \epsilon/(1-\epsilon)$ (e.g., $>25\%$ at $\epsilon{=}0.2$). The bound should therefore be read as a statement about the sign-symmetric model, with its sensitivity to error asymmetry made explicit. Each term lies in $[-2,2]$, so Hoeffding's inequality~\cite{hoeffding1963} gives $\Pr[D_i\le 0]\le\exp(-2(\frac12-\epsilon)^2 n_i^2/n)$, and a union bound over the $m-1$ spurious candidates yields Eq.~\eqref{eq:hoeffding}; conditions (i) and (iv) ensure the true candidate remains in the pool and passes the key-evidence gate. \qedhere
\end{proof}

Four caveats delimit the result. First, conditional independence is idealized and load-bearing: cell judgments are produced by a single LLM in one pass, so errors correlate across rows and columns and can \emph{qualitatively violate} independence, not just weaken the constants. Second, the bound controls selection \emph{given the pool}, not the prior event $h^{\star}\in\mathcal{H}$, whose coverage depends on candidate-enumeration quality. Third, the accuracy parameter $\epsilon$ has not been estimated---no per-cell gold standard exists, and the execution audit of Section~\ref{sec:audit} measures protocol compliance, not cell-level correctness---so the bound's premise is unverified rather than validated; estimating $\epsilon$ via expert-annotated cell audits is future work. Fourth, assumptions (iii) (sign-symmetric errors) and (v) (discriminating evidence) are strong by design and stated explicitly: on real matrices with many zero cells $n_i\ll n$, and without the symmetric model the adversarial bound of the proof sketch applies instead. The white-box property claimed throughout is auditability---errors become locatable, not impossible. We considered and discarded a Bayesian reading of the threshold itself: under hard elimination no principled crossing point emerges, and the operative question is empirical, answered by the $\kappa$ ablation of Section~\ref{sec:threshold}.

The proposition is stated for the single-answer case---a unique $h^{\star}$ selected against spurious competitors. The benchmark and the deployed protocol are multi-answer (gold sets of 1--4 syndromes and 1--6 pathogenesis; the protocol may emit several compatible candidates). The bound extends per gold candidate: for a gold set $G\subseteq\mathcal{H}$, the identical argument gives, for each $h_g\in G$,
\[
\Pr\bigl[\,\exists h_i\notin G:\ S(h_i)\ge S(h_g)\,\bigr]\;\le\;(m-|G|)\exp\!\bigl(-2(\tfrac12-\epsilon)^{2}\, n\bigr),
\]
i.e., the single-answer statement is the $|G|=1$ special case. This form uses $n$ rather than $n_i^2/n$: it is the perfect-discrimination special case, valid only where every evidence item discriminates every spurious candidate from every gold candidate---in general each gold $h_g$ carries its own $n_g$ and the exponent becomes $-2(\tfrac12-\epsilon)^2 n_g^2/n$. What genuinely requires mutual exclusivity is the mass-transfer intuition of Section~\ref{sec:matrix}, which motivates but does not enter the bound; a full multi-label bound under the official multi-select metric is future work.

\paragraph{A numerical look at the operating point}
At the benchmark's $m=10$ and the deployed matrix's typical $n\approx10$ evidence columns per case, the bound is positive only for $\epsilon\lesssim0.17$; at $\epsilon=0.20$ it evaluates to $1-9e^{-1.8}\approx-0.49$. Since $\epsilon$ is unmeasured and plausibly exceeds this range, the bound is vacuous at the operating point; we therefore report the proposition as a \emph{qualitative scaling statement}---it identifies the two levers that matter (the evidence count $n$ and the cell accuracy $\epsilon$, entering quadratically) and the direction of improvement---rather than a numerical guarantee.

\section{Bayesian reading of the dual anchors}
\label{sec:bayes-reading}

The two anchors are, formally, the two factors of a posterior~\cite{berger1985bayesian}. The data anchor supplies empirical conditionals $P_{\mathrm{data}}(h\mid e_{j})$ estimated from the specialty case database (e.g., among white-coating samples about 47\% are phlegm-heat harassing the heart; teeth-marked tongues occur in 72\% of heart--gallbladder qi deficiency cases). Two corrections to the naive reading matter here. First, these conditionals are \emph{posterior fragments}, not likelihoods: since $P(h\mid e_{j})\propto P(e_{j}\mid h)\,P(h)$, each already mixes the likelihood with the specialty sample's syndrome base rates. This is precisely why syndrome priors and explicit data-bias warnings are injected together with the conditionals: transporting from the 539-case specialty sample to a deployment population is the base-rate adjustment $P_{\mathrm{deploy}}(h\mid e)\propto P_{\mathrm{data}}(h\mid e)\cdot P_{\mathrm{deploy}}(h)/P_{\mathrm{data}}(h)$, which cannot even be stated without the prior terms. Second, the theory anchor is not a prior in disguise but a \emph{constraint set}: rules such as \emph{a thick coating alone cannot exclude deficiency syndromes} veto eliminations---they forbid driving a candidate to zero on a single sign---rather than contribute probability mass, mirroring the key-evidence gate and row aggregation of the matrix protocol. Combining the observed items $E=\{e_{1},\dots,e_{J}\}$ under conditional independence gives the naive-Bayes pooling
\begin{equation}
P(h\mid E) \;\propto\; P(h)^{\,1-J}\, \prod_{j=1}^{J} P(h\mid e_{j}),
\label{eq:pooling}\end{equation}
Two connections tie this reading to the rest of the framework. \emph{(i)} The matrix score $S(h_i)=\sum_j w_j f(h_i,e_j)$ of Section~\ref{sec:matrix} is a \emph{ternary quantization} of exactly this additive log-posterior update: $f\in\{+1,0,-1\}$ thresholds each per-item log-posterior increment, trading granularity for cell-level auditability and for the bounded-error analysis of Proposition~\ref{prop:accumulation}---whose conditional-independence assumption is the same assumption under which the pooling of Eq.~\eqref{eq:pooling} is exact. \emph{(ii)} The reading clarifies what prompt-level injection offers over weight-level internalization: prior fragments, constraints, and weights all remain inspectable, physician-revisable \emph{data} while the base model performs the integration. We state the scope explicitly: the framework does not claim that the model computes a calibrated posterior, only that the injected quantities decompose the decision into auditable prior, likelihood, and constraint terms whose provenance is logged~\cite{berger1985bayesian}.

\section*{Data and Code Availability}
\label{sec:availability}

The TCMEval-SDT benchmark is publicly available (CC-BY 4.0)~\cite{tcmeval}. The following are released in a public repository (\url{https://github.com/Silk-Road/falsification-guided-constrained-reasoning}, archived at \url{https://doi.org/10.5281/zenodo.23034930}): all evaluation and analysis scripts for every reported campaign, the reconstructed 243-question misinformation evaluation set (rebuilt from the retained evaluation aggregate of~\cite{misinfoprop}, with its documented selection bias), and the per-case result logs. The reasoning protocol central to this work is fully specified in Section~\ref{sec:methods} (matrix construction, cell judgments, aggregation and sufficiency rules, and output format) at a level that permits independent reimplementation. The agent source code and prompt templates are not publicly released at this time because a patent application covering the mechanism is pending; they are available from the corresponding author on reasonable request subject to the patent process. The private medical datasets (the 8,147-record case database, the 539-case specialty dataset, and the 1,103-pair distilled dataset) cannot be released for privacy and licensing reasons.

\fontsize{8.5pt}{10pt}\selectfont

\section*{Acknowledgements}
The authors thank the creators of the TCMEval-SDT benchmark for making their data publicly available, and the developers of the DeepSeek, Kimi, and Qwen model families.

\section*{Competing interests}
A patent application covering the agent mechanism described in this work is pending. The authors declare that they have no other competing interests or financial conflicts to disclose.

\bibliographystyle{fcs}
\bibliography{refs}

\end{document}